\definecolor{darkblue}{RGB}{25, 50, 112}
\theoremstyle{plain}
\newtheorem{theorem}{Theorem}
\newtheorem{proposition}[theorem]{Proposition}
\newtheorem{lemma}[theorem]{Lemma}
\theoremstyle{definition}
\theoremstyle{remark}
\newtheorem{remark}[theorem]{Remark}
\newcommand{\emprisk}{\widehat{R}_{n}}
\newcommand{\empadvrisk}{\widehat{R}^{\text{adv}}_{n}}
\newcommand{\advrisk}{R^{\text{adv}}}
\newcommand{\vect}[1]{\ensuremath{\mathbf{#1}}}
\newcommand{\argmin}{\mathop{\rm argmin}}
\newcommand{\argmax}{\mathop{\rm argmax}}
\newcommand{\R}{\mathbb{R}}
\newcommand{\G}{\mathcal{G}}
\newcommand{\Eover}[2]{\mathbb{E}_{#1}\left[#2\right]}
\newcommand{\x}{\vect{x}}
\newcommand{\y}{\vect{y}}
\newcommand{\z}{\vect{z}}
\renewcommand{\H}{\mathcal{H}}
\newcommand{\cX}{\mathcal{X}}
\newcommand{\cY}{\mathcal{Y}}
\newcommand{\cZ}{\mathcal{Z}}
\newcommand{\robboostalg}{\textsc{MRBoost}}
\newcommand{\robboostalgnn}{\textsc{MRBoost.NN}}
\newcommand{\samplerexp}{\textsc{Sampler.Exp}}
\newcommand{\samplerrnd}{\textsc{Sampler.Rnd}}
\newcommand{\samplermax}{\textsc{Sampler.Max}}
\newcommand{\samplerall}{\textsc{Sampler.All}}
\newcommand{\whole}{\textsc{Whole}}
\newcommand{\randinit}{\textsc{RndInit}}
\newcommand{\perinit}{\textsc{PerInit}}
\newcommand{\abernehty}{\textsc{RobBoost}}
\newcommand{\sampler}{\textsc{Sampler}}
\newcommand{\ce}{{\ell_{\operatorname{CE}}}}
\newcommand{\cemargin}{{\ell_{\operatorname{MCE}}}}
\newcommand{\cemargina}{{\ell_{\operatorname{MCE-A}}}}
\newcommand{\bce}{{\ell_{\operatorname{BCE}}}}
\newcommand{\amclf}[1]{{#1}^{\mathop{\rm am}}}
\newcommand{\amclftwo}[2]{{#1}^{\mathop{\rm am}}_{#2}}
\newcommand{\bI}{\mathbb{I}}
\newcommand{\pairwiseloss}[1]{{\rm mg}_{\rm L}\left({#1}\right)}
\newcommand{\margin}[1]{\mathop{\rm mg}\left({#1}\right)}
\newcommand{\marginrob}[1]{{\rm mg}_{\rm rob}\left({#1}\right)}
\newcommand{\notimplies}{%
  \mathrel{{\ooalign{\hidewidth$\not\phantom{=}$\hidewidth\cr$\implies$}}}}
\newcommand{\ie}{\emph{i.e.}}
\newcommand{\E}{\mathbb{E}}
\global\long\def\R{\mathbb{R}}
\def\gB{{\mathcal{B}}}
\newcommand{\KL}{D_{\mathrm{KL}}}
\newcommand{\sm}{\texttt{softmax}}
\def\rvx{{\mathbf{x}}}
\def\rvp{{\mathbf{p}}}
\def\vtheta{\boldsymbol{\theta}}
\newcommand{\indct}[1]{\mathbb{I}(#1)}
\icmltitlerunning{Building Robust Ensembles via Margin Boosting}
\begin{document}

\twocolumn[
\icmltitle{Building Robust Ensembles via Margin Boosting}




\begin{icmlauthorlist}
\icmlauthor{Dinghuai Zhang}{mila}
\icmlauthor{Hongyang Zhang}{waterloo}
\icmlauthor{Aaron Courville}{mila}
\icmlauthor{Yoshua Bengio}{mila}

\icmlauthor{Pradeep Ravikumar}{cmu}
\icmlauthor{Arun Sai Suggala}{google}
\end{icmlauthorlist}

\icmlaffiliation{mila}{Mila and Universit\'e de Montr\'eal}
\icmlaffiliation{waterloo}{University of Waterloo}
\icmlaffiliation{cmu}{
Carnegie Mellon University}
\icmlaffiliation{google}{Google Research}

\icmlcorrespondingauthor{Dinghuai Zhang}{dinghuai.zhang@mila.quebec}

\icmlkeywords{Adversarial Attacks, Robustness, Ensembles, Boosting}

\vskip 0.3in
]



\printAffiliationsAndNotice{}  

\begin{abstract}
In the context of adversarial robustness, a single model does not usually have enough power to defend against all possible adversarial attacks, and as a result, has sub-optimal robustness. Consequently, an emerging line of work  has focused on learning an ensemble of neural networks to defend against adversarial attacks. In this work, we take a principled approach towards building robust ensembles. We view this problem from the perspective of margin-boosting and develop an algorithm for learning an ensemble with maximum margin. Through extensive empirical evaluation on benchmark datasets, we show that our algorithm not only outperforms existing ensembling techniques, but also large models trained in an end-to-end fashion. An important byproduct of our work is a margin-maximizing cross-entropy (MCE) loss, which is a better alternative to the standard cross-entropy (CE) loss. Empirically, we show that replacing the CE loss in state-of-the-art adversarial training techniques with our MCE loss leads to significant performance improvement.
\end{abstract}

\section{Introduction}
\label{sec:intro}
The output of deep neural networks can be vulnerable to even a small amount of perturbation to the input~\citep{szegedy2013intriguing}. 
These perturbations, usually referred to as adversarial perturbations, can be imperceptible to humans and yet deceive even state-of-the-art models into making incorrect predictions. Owing to this vulnerability, there has been a great interest in understanding the  phenomenon of these adversarial perturbations~\citep{fawzi2018analysis, bubeck2019adversarial, ilyas2019adversarial}, and in designing techniques to defend against adversarial attacks~\citep{goodfellow2014explaining, madry2017towards,raghunathan2018certified, zhang2019theoretically}. 
However, coming up with effective defenses has turned out to be a hard problem as many of the proposed defenses were eventually circumvented by novel adversarial attacks~\citep{athalye2018obfuscated, tramer2020adaptive}. Even the well performing techniques, such as adversarial training (AT)~\citep{madry2017towards}, are unsatisfactory as they do not yet achieve high enough adversarial robustness. 

One of the reasons for the unsatisfactory performance of existing defenses is that they train a single neural network to defend against all possible adversarial attacks. Such single models are often not powerful enough to defend against all the moves of the adversary and as a result, have sub-optimal robustness. Consequently, an emerging line of work in adversarial robustness has focused on constructing ensembles of neural networks~\citep{kariyappa2019improving, verma2019error, Pinot2020RandomizationMH}. These ensembling techniques work under the hypothesis that an ensemble with a diverse collection of classifiers can be more effective at defending against adversarial attacks, and can lead to better robustness guarantees. This hypothesis was in fact theoretically proven to be true by \citet{Pinot2020RandomizationMH}. 
However, many of the existing ensemble based defenses were not successful, and were defeated by stronger attacks~\citep{tramer2020adaptive}. In this paper, we show that the recently proposed ensembling technique of \citet{Pinot2020RandomizationMH} can be defeated (Appendix~\ref{sec:pinot_defense_break}), thus adding the latter to the growing list of ``circumvented ensemble defenses''. This shows that the problem of designing ensembles that are robust to adversarial attacks is pretty much unsolved. 

In this work, we take a principled approach towards constructing robust ensembles. We view this problem from the perspective of boosting and ask the following question:
\begin{center}
\vskip -0.06in
\emph{How can we combine multiple base classifiers into a strong classifier that is robust to adversarial attacks?}
\end{center}\vskip -0.06in
Our answer to this question relies on the key machine learning notion of margin, which is known to govern the generalization performance of a model~\citep{bartlett1998sample}. In particular, we develop a margin-maximizing boosting framework that aims to find an ensemble with maximum margin via a two-player zero-sum game between the learner and the adversary, the solution to which is the desired max-margin ensemble. 
One of our key contributions 
is to provide an efficient algorithm (\robboostalg) for solving this game.  
Through extensive empirical evaluation on benchmark datasets, we show that our algorithm not only outperforms existing ensembling techniques, but also large models trained in an end-to-end fashion.  
An important byproduct of our work is a margin-maximizing cross-entropy (MCE) loss, which is a better alternative to the standard cross-entropy (CE) loss. 
Empirically, we demonstrate that replacing the CE loss in state-of-the-art adversarial training techniques with our MCE loss leads to significant
performance
improvement.
Our code is available at \href{https://github.com/zdhNarsil/margin-boosting}{https://github.com/zdhNarsil/margin-boosting}.

\textbf{Contributions.}
Here are the key contributions of our work:\vspace{-0.12in}
\begin{itemize}
  \setlength\itemsep{0in}
    \item We propose a margin-boosting framework for learning max-margin ensembles. Moreover, we prove the optimality of our framework by showing that it requires the weakest possible conditions on  base classifiers to output an ensemble with strong adversarial performance; 
    \item We derive a computationally efficient algorithm ($\robboostalgnn$) from our margin-boosting framework. Through extensive empirical evaluation, we demonstrate the effectiveness of our algorithm;
    \item Drawing from our boosting framework, we propose the MCE loss, a new variant of the CE loss, which improves the adversarial robustness of state-of-the-art defenses.  
\end{itemize}


\vspace*{-4mm}
\section{Preliminaries}
\label{sec:background}

In this section, we set up the notation and review necessary background on adversarial robustness and ensembles. A
consolidated list of notation can be found in Appendix~\ref{sec:app_notation}. 

\textbf{Notation.} Let $(\x,y) \in \cX \times \cY$ denote a feature-label pair following an unknown probability distribution $P$. In this work, we consider the multi-class classification problem where $\cY =  \{0, \dots K-1\}$, where $K$ is the number of classes, and assume $\cX \subseteq \R^d$.  
Let $S=\{(\x_i,y_i)\}_{i=1}^n$ be $n$ i.i.d samples drawn from $P$, and $P_n$ the empirical distribution of $S$.  
We let $h:\cX \to \cY$ denote a generic classifier which predicts the label of $\x$ as $h(\x)$. 
In practice, such a classifier is usually constructed by first constructing a \emph{score-based} classifier $g:\cX \to \R^K$ which assigns a confidence score to each class, and then mapping its output to an appropriate class using the $\argmax$ operation: $\argmax_{j\in \cY} [g(\x)]_j$.\footnote{If there are multiple optimizers to this problem, we pick one of the optimizers uniformly at random.} 
We denote the resulting classifier by $\amclf{g}$.

\textbf{Standard Classification Risk.} 
The expected classification risk of a score-based classifier $g$ is defined as $ \Eover{(X,Y)\sim P}{\ell_{0-1}(g(X), Y)}$, 
where $\ell_{0-1}(g(X), Y) = 0$ if  $\amclf{g}(X) = Y$, and $1$ otherwise. 
Since optimizing $0/1$ risk is computationally intractable, it is often replaced with convex surrogates, which we denote by $\ell(g(X),Y)$.
A popular choice for $\ell$ is the cross-entropy loss: $\ce(g(\x), y)\coloneqq -[g(\x)]_{y} + \log \left(\sum_{j \in \cY}\exp{[g(\x)]_j} \right)$. 
The population and empirical risks of classifier $g$ w.r.t loss $\ell$ are  defined as
$
R(g; \ell) \coloneqq \Eover{(X,Y) \sim P}{\ell(g(X),Y)},$ \mbox{$\emprisk(g; \ell) \coloneqq \Eover{(X,Y)\sim P_n}{\ell(g(X), Y)}.$} \vspace{0.05in}\\
\textbf{Adversarial Risk.} We consider the following robustness setting in this work: given a classifier, there is an adversary which corrupts the inputs to the classifier with the intention of making the model misclassify the inputs.  Our goal is
to design models that are robust to such adversaries. Let $\gB(\epsilon)$ be the set of perturbations that the adversary is allowed to add to the input. Popular choices for $\gB(\epsilon)$ for instance include $\ell_p$ norm balls $\{\omega: \|\omega\|_p \leq \epsilon\}$ for $p \in \{2,\infty\}$. In this work, we assume $\gB(\epsilon)$ is a compact set (this is satisfied by $\ell_p$ norm balls). Given this setting, the population and empirical  adversarial risks of a classifier $g$ w.r.t loss $\ell$ are defined as
\vspace{-2mm}
\begin{align*}
    &\advrisk(g; \ell) \coloneqq \Eover{(X,Y) \sim P}{\max_{\delta\in \gB(\epsilon)}\ell(g(X+\delta),Y)},\\ 
    &\empadvrisk(g; \ell) \coloneqq \Eover{(X,Y)\sim P_n}{\max_{\delta\in \gB(\epsilon)}\ell(g(X+\delta), Y)}.
\end{align*}
 \vspace{-4mm}
 
\textbf{Ensembles.} 
Ensembling is a popular technique in machine learning for constructing models with good generalization performance (\ie, small population risk). 
Ensembling typically involves linearly combining the predictions of several base classifiers.  Let $\H$ be the set of base classifiers, where each $h\in \H$ maps from $\cX$ to $\cY$.
In ensembling, we place a probability distribution $Q$ over $\H$ which specifies the weight of each base classifier. 
This distribution defines a score-based classifier $h_{Q}: \cX \to \R^K$ with $[h_Q(\x)]_{j} = \Eover{h\sim Q}{\bI(h(\x) = j)}$. 
This can be converted into a standard classifier using the $\argmax$ operation: $\amclftwo{h}{Q}(\x) = \argmax_{j \in \cY} [h_Q(\x)]_{j}.$   
If we have score-based base classifiers $\G = \{g_1, g_2, \dots\}$, where each $g\in \G$ maps $\cX$ to $\R^K$, we can simply linearly combine them via a set of real-valued weights $W$ over $\G$, and define a score-based classifier $g_{W}$ as $[g_W(\x)]_j = \sum_{g \in \G} W(g)[g(\x)]_j$, where $W(g) \in \R$ is the weight of the base classifier $g$. $g_W$ can be converted into a standard classifier using the $\argmax$ operation: $\amclftwo{g}{W}(\x) = \argmax_{j \in \cY} [g_W(\x)]_{j}.$ 

\textbf{Boosting.} 
Boosting is perhaps the most popular technique for creating ensembles. 
Boosting aims to address the following question: 
\textit{``Given a set of base classifiers, how can we combine them to produce an ensemble with the best predictive performance?''} 
Numerous techniques have been proposed to answer this question, with the most popular ones being \emph{margin}-boosting~\citep{freund1996experiments, breiman1999prediction, ratsch2005efficient} and \emph{greedy}-boosting~\cite{mason2000boosting, friedman2001greedy}. 
The technique developed in this work falls in the category of margin-boosting. 
Margin-boosting works under the hypothesis that a large-margin classifier has good generalization performance~\citep{bartlett1998boosting, bartlett1998sample}. 
Consequently, it aims to learn an ensemble with large margin. 
Let $\H$ be the set of base classifiers, where each $h\in \H$ maps $\cX$ to $\cY$. 
The margin of the ensemble $h_Q$, for some probability distribution $Q$ over $\H$, at point $(\x,y)$, is defined as
\vspace{-1mm}
\begin{align}
\label{eqn:margin}
\margin{Q, \x,y} \coloneqq [h_Q(\x)]_y - \max_{y' \neq y} [h_Q(\x)]_{y'}.
\end{align}
Intuitively, margin captures the confidence with which $h_{Q}$ assigns $\x$ to class $y$. 
We ideally want $\margin{Q, \x, y}$ to be large for all $(\x,y) \in S$. 
To capture this, we introduce the notion of \emph{minimum margin}  over the dataset $S$ which is defined as \mbox{$\margin{Q, S} \coloneqq \min_{(\x,y)\in S}\margin{Q, \x, y}$.} 
In margin-boosting, we aim to find a $Q$ with the largest possible minimum margin. 
This leads us to the following optimization problem: $\max_{Q \in \Delta(\H)}\margin{Q, S}$, where $\Delta(\H)$ is the set of all probability distributions over $\H$. 
AdaBoost.MR, a popular boosting algorithm for multi-class classification, can be viewed as solving this optimization problem~\citep{schapire1999improved, mukherjee2013theory}.



\vspace{-2mm}
\subsection{Related Work}
\label{sec:related_work}

\vspace{-1mm}
\textbf{Adversarial Robustness.} 
Numerous techniques have been proposed to defend neural networks against adversarial attacks \citep{Szegedy2014IntriguingPO}. These techniques broadly fall into two categories. The first category of techniques called \emph{empirical defenses} rely on heuristics and do not provide any guarantee on the robustness of the learned models. 
Adversarial training (AT)~\citep{madry2017towards} is by far the most popular defense in this category. 
AT has been successfully improved by many recent works such as \citet{zhang2019theoretically, Zhang2019YouOP, wang2019improving, Carmon2019UnlabeledDI, Zhang2020AttacksWD, Shi2020InformativeDF}. 
The second category of techniques 
called \emph{certified defenses} output models whose robustness can be certified in the following sense: 
at any given point, they can provide a certificate  proving the robustness of the learned model to adversarial attacks at that point.
Early computationally efficient approaches in this category were developed by \citet{raghunathan2018certified, wong2018provable}.
Several recent techniques such as randomized smoothing~\citep{cohen2019certified, salman2019provably, Zhang2020BlackBoxCW,blum2020random, yang2021certified} have improved upon these early works and even scale to ImageNet size datasets.

Several recent works have attempted to use ensembles to defend against adversarial attacks.
\citet{sen2019empir} train the base classifiers of the ensemble independent of each other,
which ignores the interaction between base classifiers~\citep{pang2019improving}. 
Other works \citep{verma2019error, kariyappa2019improving, pang2019improving, meng2020athena} simultaneously learn all the components of the ensemble, which requires huge memory and computational resources.
These works add diversity promoting regularizers to their training objectives to learn good ensembles. 
A number of these defences are known to be rather weak~\citep{tramer2020adaptive}. 
There are also works which use boosting inspired approaches and construct ensembles in a sequential manner~\citep{Pinot2020RandomizationMH, abernethy2021multiclass}.
Many of these techniques rely on heuristics and are rather weak. 
In Appendix~\ref{sec:pinot_defense_break}, we empirically show that the defense of \citet{Pinot2020RandomizationMH} can be circumvented with carefully designed adversarial attacks. 
Moreover, we show that our boosting algorithm has better performance than the algorithm of~\citet{abernethy2021multiclass}.

\vspace{-1mm}
\textbf{Boosting.} Boosting has a rich history in both computer science (CS) and statistics. The CS community takes a game-theoretic perspective of boosting and views boosting algorithms as playing a game against a base/weak learner~\citep{freund1995desicion}. The statistical community views it as greedy stagewise optimization~\citep{friedman2001greedy, mason2000boosting}. Both these views have contributed to the development of popular boosting techniques such as AdaBoost~\citep{freund1995desicion}, XGBoost~\citep{chen2016xgboost}. 
In this work, we take the game-theoretic perspective to build robust ensembles. 
Recently, boosting has seen a revival in the deep learning community. This is because boosting techniques consume less memory than end-to-end training of deep networks and can accommodate much larger models in limited memory~\citep{huang2017learning, nitanda2018functional, suggala2020generalized}. Moreover, boosting techniques are easier to understand from a theoretical and optimization standpoint and can make neural networks easy to adopt in critical applications.

\vspace{-2mm}
\section{Margin-Boosting for Robustness}
\label{sec:margin_boosting}
In this section, we present our margin-boosting framework for building robust ensembles. 
Let $\H$ be a compact set of base classifiers. Typical choices for $\H$ include the set of all decision trees of certain depth, and the set of all neural networks of bounded depth and width. Given $\H$, our goal is to design an ensemble (\emph{i.e.,} identify a $Q \in \Delta(\H)$) which has the best possible robustness towards adversarial attacks.  The starting point for our boosting framework is the observation that large margin classifiers tend to generalize well to unseen data. In fact, large margin has been attributed to the success of popular ML techniques such as SVMs and boosting~\citep{bartlett1998sample, bartlett1998boosting, mason2000improved}.  So, in this work, we learn ensembles with large margins to defend against adversarial attacks. 
However, unlike ordinary boosting, ensuring large margins for data points in $S$ does not suffice for adversarial robustness. In robust boosting, we need \emph{large margins even at the perturbed points} for the ensemble to have good adversarial generalization~\citep{khim2018adversarial, yin2019rademacher}. 
Letting $h_Q$ be our ensemble, we want $\margin{Q, \x+\delta, y}$ to be large for all $(\x,y) \in S$, $\delta \in \gB(\epsilon)$.
To capture this, we again introduce the notion of \emph{minimum robust margin} of $h_Q$ which is defined as \mbox{$\marginrob{Q, S} \coloneqq \min_{(\x,y)\in S}\min_{\delta \in \gB(\epsilon)}\margin{Q, \x+\delta, y}$}. 
In our boosting framework, we aim to find a $Q$ with the largest possible  $\marginrob{Q, S}$, which leads us to the following optimization problem: $\max_{Q\in \Delta(\H)} \marginrob{Q, S}$. This problem can be equivalently written as 
\begin{align}
\label{eqn:robust_boosting_game}
\max_{Q \in \Delta(\H)}\min_{\substack{(\x,y) \in S,\\y' \in \cY\setminus \{y\}, \delta \in \gB(\epsilon)}} [h_Q(\x+\delta)]_y - [h_Q(\x+\delta)]_{y'}.
\end{align}
In this work, we often refer to such max-min problems as two-player zero-sum games.
\vspace{-2mm}
\subsection{Margin-Boosting is Optimal} 
\label{sec:wl_condition}
Before we present our algorithm for solving the max-min problem in Equation~\eqref{eqn:robust_boosting_game}, we try to understand the margin-boosting framework from a theoretical perspective. In particular, we are interested in studying the following questions pertaining to the quality of our boosting framework:
\vspace{-0.1in}
\begin{enumerate}
    \item Under what conditions on $\H$ does the boosting framework output an ensemble with $100\%$ adversarial accuracy on the training set?
    \vspace{-0.05in}
    \item Are these conditions  on $\H$ optimal? Can there be a different boosting framework that outputs an $100\%$ accurate ensemble under milder conditions on $\H$? 
\end{enumerate}
\vspace{-0.1in}

Understanding these questions can aid us in designing appropriate base hypothesis classes $\H$ for our boosting framework. 
The choice of $\H$ is crucial as it can significantly impact learning and generalization. If $\H$ is too weak to satisfy the required conditions, then the boosting framework cannot learn a robust ensemble. 
On the other hand, using a more complex $\H$ than necessary can result in overfitting. 
Understanding the second question can thus help us compare various boosting frameworks. 
For instance, consider two boosting frameworks $B_1$ and $B_2$. 
If $B_1$ requires more powerful hypothesis class $\H$ than $B_2$ to output an $100\%$ accurate ensemble, then the latter should be preferred over the former as using a less powerful $\H$ can prevent overfitting. 
The following theorems answer these questions. 

\begin{theorem}
\label{thm:wl_condition}
The following is a necessary and sufficient condition on $\H$ that ensures that any maximizer of Equation~\eqref{eqn:robust_boosting_game} achieves $100\%$ adversarial accuracy on $S$: for any probability distribution $P'$ over points in the set \mbox{$S_{\text{aug}} \coloneqq \{(\x,y, y', \delta): (\x,y)\in S, y' \in \cY\setminus\{y\}, \delta \in \gB(\epsilon)\}$}, there exists a classifier $h\in \H$ which achieves slightly-better-than-random performance on $P'$
\begin{align*}
    &\mathbb{E}_{(\x,y, y', \delta) \sim P'}[\indct{h(\x+\delta)= y}]\\
    &\quad \geq  \mathbb{E}_{(\x,y, y', \delta) \sim P'}[\indct{h(\x+\delta)= y'}] + \tau.
\end{align*}
Here $\tau > 0$ is some constant.
\end{theorem}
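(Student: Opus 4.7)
The plan is to view Equation~\eqref{eqn:robust_boosting_game} as a bilinear zero-sum game and show, via minimax duality, that both the weak-learning condition and $100\%$ adversarial accuracy of any maximizer are equivalent to the game's value being strictly positive.

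\emph{Reformulation.} I would first rewrite the inner minimum over $S_{\text{aug}}$ as a minimum over $\Delta(S_{\text{aug}})$ (a linear functional on a compact convex set attains its extremum at an extreme point, i.e., a Dirac measure) and use $[h_Q(\x+\delta)]_j = \Eover{h\sim Q}{\indct{h(\x+\delta)=j}}$ to arrive at the bilinear saddle-point value
\begin{align*}
V \;\coloneqq\; \max_{Q\in\Delta(\H)}\;\min_{P'\in\Delta(S_{\text{aug}})}\; \E_{h\sim Q,\,(\x,y,y',\delta)\sim P'}\!\bigl[\indct{h(\x+\delta)=y} - \indct{h(\x+\delta)=y'}\bigr].
\end{align*}
Note that $V = \max_{Q}\marginrob{Q,S}$, so $V$ is precisely the quantity our boosting framework seeks.

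\emph{Minimax swap.} Applying the minimax theorem---justified because the payoff is bilinear on convex sets and $\Delta(S_{\text{aug}})$ is compact (since $S$ is finite, $\cY$ is finite, and $\gB(\epsilon)$ is compact)---and collapsing the inner maximum over $Q$ to a pure strategy yields the dual representation
\begin{align*}
V \;=\; \min_{P'\in\Delta(S_{\text{aug}})}\;\max_{h\in\H}\; \E_{P'}\!\bigl[\indct{h(\x+\delta)=y} - \indct{h(\x+\delta)=y'}\bigr].
\end{align*}

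\emph{Accuracy $\Leftrightarrow V>0$.} Any maximizer $Q^\star$ satisfies $\marginrob{Q^\star,S}=V$. If $V>0$, then at every $(\x,y)\in S$, $\delta\in\gB(\epsilon)$, and $y'\neq y$ one has $[h_{Q^\star}(\x+\delta)]_y > [h_{Q^\star}(\x+\delta)]_{y'}$, so $y$ is the unique argmax and $\amclftwo{h}{Q^\star}(\x+\delta)=y$ on every perturbed training point. Conversely, if $V\le 0$ then some perturbed training point carries non-positive margin, producing either outright misclassification or a tie with a wrong label; the uniform tie-breaking rule then gives correct-prediction probability strictly below one, contradicting $100\%$ adversarial accuracy.

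\emph{WL $\Leftrightarrow V>0$.} The dual form makes the remaining equivalence transparent: the WL condition is precisely the assertion that the inner supremum in the dual form is bounded below by $\tau>0$ uniformly in $P'$. WL with constant $\tau>0$ therefore gives $V\ge\tau>0$ immediately; conversely, if no uniform $\tau>0$ works, a compactness/limiting argument on $\Delta(S_{\text{aug}})$ produces a distribution $P^\dagger$ for which $\max_{h}\E_{P^\dagger}[\indct{h=y}-\indct{h=y'}]\le 0$, so $V\le 0$.

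\emph{Main obstacle.} The one delicate step is justifying the minimax interchange, since $\delta\mapsto\indct{h(\x+\delta)=y}$ is typically discontinuous, so the standard continuity hypotheses of Sion's theorem on $\Delta(S_{\text{aug}})$ under the weak-$*$ topology do not apply verbatim. I would bypass this by first approximating $\gB(\epsilon)$ by a finite $\epsilon'$-net, applying von Neumann's finite minimax theorem to the resulting finite bilinear game, and passing $\epsilon'\to 0$ using the uniform boundedness of the indicator payoff to conclude.
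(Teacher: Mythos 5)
Your proof follows essentially the same route as the paper's: you rewrite the inner minimum as a minimum over $\Delta(S_{\text{aug}})$, pass to the linearized bilinear game, invoke minimax duality to equate the primal value with the dual (WL-side) value, and then argue that the common value $V$ being strictly positive is equivalent both to the weak-learning condition and to every maximizer $Q^\star$ having strictly positive minimum robust margin (and hence $100\%$ adversarial accuracy under the paper's random tie-breaking rule). The paper factors this chain through an intermediate Lemma (\ref{lem:equiv_wl_condition}) with three equivalent statements, whereas you fold the ``$\exists\,Q$ with $100\%$ accuracy'' step implicitly into ``$V>0$''; this is cosmetic, not substantive.

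Where you go beyond the paper is in flagging, correctly, that the minimax interchange is not a verbatim application of Sion's theorem. The payoff $\delta\mapsto\indct{h(\x+\delta)=y}-\indct{h(\x+\delta)=y'}$ is generically discontinuous, so $P'\mapsto L(Q,P')$ need not be lower semicontinuous in the weak-$*$ topology under which $\Delta(\gB(\epsilon))$ is compact; the paper simply cites Sion (and Ky Fan) and moves on, relying on ``compactness of $\H,\gB(\epsilon)$'' without addressing continuity. The same issue recurs at the paper's step $(c)$, where compactness is invoked to pass from $\max_Q\marginrob{Q,S}>0$ to $\geq\tau$ for some $\tau>0$: this again requires lower semicontinuity of $P'\mapsto\sup_h\E_{P'}[\cdot]$, which is not automatic for discontinuous $h$. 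You therefore identify a genuine gap in the paper's own argument.

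That said, your proposed repair via a finite $\epsilon'$-net plus a passage to the limit is not itself airtight. With a discontinuous payoff, the values of the net-restricted games need not converge to the value of the continuum game: a classifier $h$ can misclassify on a $\delta$-set of positive measure that an $\epsilon'$-net entirely misses (or, conversely, the net may fail to witness the worst perturbations), and ``uniform boundedness of the indicator'' controls magnitude but not the approximation error introduced by the net. To make the limiting argument rigorous you would need additional structure---for instance, that for each fixed $(\x,y,y')$ the decision regions $\{\delta\in\gB(\epsilon): h(\x+\delta)=j\}$ have boundaries of measure zero uniformly over $h\in\H$ (e.g., classifiers with bounded piecewise-linear complexity or finite VC dimension), or some regularity of the mixed strategies that makes $\Eover{h\sim Q}{\indct{h(\x+\delta)=j}}$ upper/lower semicontinuous in $\delta$. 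Alternatively, one can appeal to more general minimax theorems for bounded, measurable, bilinear payoffs on compact metric spaces. In short: you have correctly located the delicate step, but the proposed patch still needs a concrete hypothesis on $\H$ (or a different minimax theorem) to close cleanly.
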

\begin{theorem}
\label{thm:optimal_boosting_framework}
The margin-based boosting framework is \emph{optimal} in the following sense: there does not exist any other boosting framework which can guarantee a solution with $100\%$ adversarial accuracy with milder conditions on $\H$ than the above margin-based boosting framework.
\end{theorem}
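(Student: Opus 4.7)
The plan is to prove Theorem~\ref{thm:optimal_boosting_framework} by contrapositive: since any boosting framework outputs some ensemble $Q \in \Delta(\H)$, it suffices to show that whenever the weak-learnability condition of Theorem~\ref{thm:wl_condition} fails for a class $\H$, \emph{no} $Q \in \Delta(\H)$ attains $100\%$ adversarial accuracy on $S$. Once this is established, any competing boosting framework that succeeds on $\H$ must itself certify the weak-learnability condition, making it the mildest possible condition on $\H$.

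The first step is to recast both sides of the claim as the value of a single zero-sum game. Define
\[
F(Q, P') \;=\; \E_{(\x,y,y',\delta)\sim P'}\E_{h\sim Q}\bigl[\indct{h(\x+\delta)=y}-\indct{h(\x+\delta)=y'}\bigr],
\]
which is bilinear on $\Delta(\H) \times \Delta(S_{\text{aug}})$; both factors are compact and convex (using compactness of $\H$, of $\gB(\epsilon)$, and finiteness of $S$ and $\cY$). The weak-learnability condition of Theorem~\ref{thm:wl_condition} is equivalent to $\inf_{P'}\sup_{Q} F(Q, P') > 0$ (lifting $\sup_h$ to $\sup_Q$ preserves the supremum of a linear functional). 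On the other side, $\sup_Q \marginrob{Q, S} = \sup_Q \inf_{P'} F(Q, P')$, because the infimum of a linear functional over $\Delta(S_{\text{aug}})$ is attained at a Dirac mass on some $(\x,y,y',\delta)\in S_{\text{aug}}$.

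Applying Sion's minimax theorem to the bilinear payoff $F$ over the compact convex sets then gives $\sup_Q \inf_{P'} F = \inf_{P'} \sup_Q F$, so the weak-learnability condition is equivalent to the existence of an ensemble with strictly positive robust margin. A strictly positive robust margin is in turn equivalent to $100\%$ adversarial accuracy, since $y$ must be the unique argmax at every $(\x,\delta)$ --- otherwise the argmax tie-breaking rule would misclassify with positive probability. Taking contrapositives, failure of weak learnability implies that no ensemble over $\H$ attains $100\%$ adversarial accuracy, which is exactly the optimality statement claimed.

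The main obstacle I anticipate is justifying the minimax step, because the payoff $F$ is only as well-behaved in $Q$ as $\H$ permits; the indicator $\indct{h(\x+\delta)=y}$ is not continuous in $h$ under generic topologies. The cleanest workaround is to lift to $\Delta(\H)$ in the weak-$*$ topology so that $F$ becomes linear in $Q$ and directly verify Sion's hypotheses; alternatively, one can approximate $\H$ with a finite $\tau/2$-cover, apply the finite-dimensional von Neumann minimax theorem on each cover, and pass to the limit --- a $\tau/2$-perturbation changes the game value by at most $\tau/2$, so it does not affect the sign of the value and the equivalence is preserved.
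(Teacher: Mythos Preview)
Your proposal is correct and follows essentially the same route as the paper: the paper packages the equivalence of weak learnability, positive maximal robust margin, and existence of a $100\%$-accurate ensemble into Lemma~\ref{lem:equiv_wl_condition}, proved via the same Sion/Ky Fan minimax argument on the bilinear payoff you call $F$, and then deduces optimality by the identical contrapositive (any framework that succeeds outputs some $Q\in\Delta(\H)$ with $100\%$ accuracy, hence certifies the weak-learning condition). Your explicit concern about continuity in the minimax step is legitimate, but the paper handles it the same way, invoking Sion's (and Ky Fan's) theorem under the standing compactness assumptions on $\H$ and $\gB(\epsilon)$ without further regularity discussion.
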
\vspace{-0.1in}
\textbf{Discussion.} 
The condition in Theorem~\ref{thm:wl_condition} holds if for any weighting of points in $S_{\text{aug}}$, there exists a base classifier in $\H$ which performs slightly better than \emph{random guessing}, where performance is measured with respect to an appropriate metric. 
In the case of binary classification, this metric boils down to ordinary classification accuracy, and the condition in Theorem~\ref{thm:wl_condition} can be rewritten as
\vspace{-2mm}
\begin{align*}
    \mathbb{E}_{(\x,y, y', \delta) \sim P'}[\indct{h(\x+\delta)= y}]\geq \frac{1+\tau}{2}.
\end{align*}
Such conditions are referred to as \emph{weak learning conditions} in the literature of boosting and have played a key role in the design and analysis of boosting algorithms~\citep{freund1995desicion, telgarsky2011fast, mukherjee2013theory}. 

Theorem~\ref{thm:optimal_boosting_framework} shows that the margin-based boosting framework is \emph{optimal} in the sense that among all boosting frameworks, the margin-boosting framework requires the weakest possible weak learning condition.
In a recent work, \citet{mukherjee2013theory} obtained a similar result in the context of standard multi-class classification.  
In particular, they develop optimal boosting frameworks and identify minimal weak learning conditions for standard multi-class classification.
Our work extends their results to the setting of adversarial robustness. 
Moreover, the results of \citet{mukherjee2013theory} can be obtained as a special case of our results by setting $\gB(\epsilon) = \{0\}.$
\vspace{0.05in}\\
\textbf{Comparison with \citet{abernethy2021multiclass}.}  
Recently, \citet{abernethy2021multiclass} developed a boosting framework for adversarial robustness. We now show that their framework is \emph{strictly} sub-optimal to our framework.
\citet{abernethy2021multiclass} require $\H$ to satisfy the following weak learning condition to guarantee that their boosting framework outputs an ensemble with $100\%$ adversarial accuracy: for any probability distribution $P'$ over points in the set \mbox{$\Tilde{S}_{\text{aug}} \coloneqq \{(\x,y, y'): (\x,y)\in S, y' \in \cY\setminus\{y\}\}$}, there exists a classifier $h\in \H$ which satisfies the following for some $\tau >0$:
\vspace{-2mm}
\begin{align*}
    &\mathbb{E}_{(\x,y, y') \sim P'}[\indct{\forall \delta \in \gB(\epsilon): h(\x+\delta)= y}]\\
    &\quad \geq  \mathbb{E}_{(\x,y, y') \sim P'}[\indct{\exists\delta \in \gB(\epsilon): h(\x+\delta)= y'}] + \tau.
\end{align*}




\begin{proposition}
\label{prop:wl_conditions_comparison}
Let $\text{WL}_{\robboostalg}$ denote the necessary and sufficient weak learning condition of our margin-boosting framework and  $\text{WL}_{\abernehty}$ denote the weak learning condition of the boosting framework of \citet{abernethy2021multiclass}. 
Then \mbox{$\text{WL}_{\abernehty} \implies \text{WL}_{\robboostalg}$.} The implication does not hold the other way round; that is, \mbox{$\text{WL}_{\robboostalg} \notimplies \text{WL}_{\abernehty}$.}

\end{proposition}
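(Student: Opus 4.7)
The plan is to treat the two assertions separately: the implication is proved by a marginalization reduction, and the non-implication by a small explicit counterexample.

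For $\text{WL}_{\abernehty} \implies \text{WL}_{\robboostalg}$, I would fix an arbitrary distribution $P'$ on $S_{\text{aug}}$ and project out the perturbation coordinate to obtain its marginal $\tilde{P}'$ on $\tilde{S}_{\text{aug}}$. Applying $\text{WL}_{\abernehty}$ to $\tilde{P}'$ produces some $h \in \H$ whose Abernethy margin is at least $\tau$. The same $h$ then witnesses $\text{WL}_{\robboostalg}$ under $P'$: using the tower identity $\mathbb{E}_{P'}[\cdot] = \mathbb{E}_{\tilde{P}'}[\mathbb{E}_{\delta \mid \x,y,y'}[\cdot]]$ together with the pointwise bounds
\begin{align*}
\Pr_{\delta \mid \x,y,y'}[h(\x+\delta)=y] &\geq \indct{\forall\,\delta'\!\in\gB(\epsilon):\, h(\x+\delta')=y},\\
\Pr_{\delta \mid \x,y,y'}[h(\x+\delta)=y'] &\leq \indct{\exists\,\delta'\!\in\gB(\epsilon):\, h(\x+\delta')=y'},
\end{align*}
(valid because the conditional support of $\delta$ is contained in $\gB(\epsilon)$), subtraction yields a margin of at least $\tau$ on the difference of the two expectations under $P'$, as required.

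For the non-implication, I would exhibit an explicit three-hypothesis counterexample. Take binary labels $\cY=\{0,1\}$, a singleton training set $S=\{(\x_0,0)\}$, and a three-point perturbation set $\gB(\epsilon)=\{\delta_1,\delta_2,\delta_3\}$, with $\H=\{h_1,h_2,h_3\}$ where $h_i(\x_0+\delta_j)=1$ if $j=i$ and $0$ otherwise. Since $\tilde{S}_{\text{aug}}$ consists of the single tuple $(\x_0,0,1)$, every $h_i$ satisfies $\indct{\forall\delta: h_i(\x_0+\delta)=0}=0$ but $\indct{\exists\delta: h_i(\x_0+\delta)=1}=1$, so no $h_i$ meets $\text{WL}_{\abernehty}$ for any positive $\tau$. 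In contrast, for an arbitrary distribution $P'$ over the three perturbations with masses $p_1,p_2,p_3$, the $\text{WL}_{\robboostalg}$ margin of $h_i$ equals $1-2p_i$, and since $(1-2p_1)+(1-2p_2)+(1-2p_3)=1$, a pigeonhole argument guarantees that some $h_i$ achieves margin at least $1/3$, so $\text{WL}_{\robboostalg}$ holds with $\tau=1/3$.

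The main obstacle is the non-implication: one has to produce a counterexample that is minimal enough to keep the pigeonhole accounting transparent while still genuinely separating the two conditions, and the symmetric three-perturbation construction is precisely what makes both the failure of $\text{WL}_{\abernehty}$ and the success of $\text{WL}_{\robboostalg}$ elementary to verify. The forward direction is essentially mechanical once the correct marginalization is chosen, with the only subtle point being that the conditional support of $\delta$ lies inside $\gB(\epsilon)$, which is immediate from the definition of $S_{\text{aug}}$.
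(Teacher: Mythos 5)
Your proof is correct in both parts, and both take a genuinely different route from the paper. For the implication, the paper argues indirectly: it invokes Theorem~\ref{thm:optimal_boosting_framework} (which in turn rests on Lemma~\ref{lem:equiv_wl_condition} and Sion's minimax theorem) together with the implicit premise that Abernethy's framework achieves $100\%$ adversarial accuracy under $\text{WL}_{\abernehty}$, so that ``some $Q$ attains $100\%$ accuracy'' is obtained as an intermediate step and then converted back to $\text{WL}_{\robboostalg}$. Your argument is a direct comparison of the two weak-learning conditions themselves: marginalize $P'$ to $\tilde P'$ over $\tilde S_{\text{aug}}$, take the $h$ furnished by $\text{WL}_{\abernehty}$, and use the pointwise bounds $\Pr_{\delta\mid\x,y,y'}[h(\x+\delta)=y]\geq\indct{\forall\delta':h(\x+\delta')=y}$ and $\Pr_{\delta\mid\x,y,y'}[h(\x+\delta)=y']\leq\indct{\exists\delta':h(\x+\delta')=y'}$. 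This is self-contained, avoids any minimax duality, and makes transparent \emph{why} the Abernethy condition is stronger (its ``for all / there exists'' quantifiers are worst-case over $\delta$, dominating any averaging). For the non-implication, the paper uses a continuous construction ($\gB(\epsilon)=[-1,1]$, classifiers $h_\theta$ that misclassify on a single length-$0.1$ subinterval, and a covering/pigeonhole argument over $20$ overlapping intervals to extract $\tau=0.2$); yours replaces this with the minimal discrete analogue ($\gB(\epsilon)$ a three-point set, $h_i$ misclassifying only at $\delta_i$), for which the pigeonhole bound $\min_i p_i\le 1/3$ gives $\tau=1/3$ immediately. Both constructions are valid under the paper's standing compactness assumption on $\gB(\epsilon)$ and $\H$; yours is cleaner and the accounting is trivial, at the cost of looking slightly less like the ``$\ell_p$-ball'' threat models the paper emphasizes, while the paper's version demonstrates the separation with a more ``realistic'' continuous perturbation set.
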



\vspace{-3mm}
\subsection{Robust Boosting Algorithm}
\label{sec:mrboost}
\vspace{-1mm}

\begin{algorithm}[tb]
   \caption{$\robboostalg$}
   \label{alg:mrrob}
\begin{algorithmic}[1]
  \small
  \STATE \textbf{Input:} training data $S$, boosting iterations $T$, learning rate $\eta$.
  \STATE Let $P_1$ be the uniform distribution over $S_{\text{aug}}$.
  \FOR{$t = 1 \dots T$}
  \STATE Compute $h_t\in \H$ as the minimizer of:
  \vspace{-0.08in}
  \begin{align*}
      \min_{h \in \H} \mathbb{E}_{(\x,y, y', \delta) \sim P_{t}}[\pairwiseloss{h(\x+\delta), y,y'}].
  \end{align*}
  \STATE  Compute probability distribution $P_{t+1}$, supported on $S_{\text{aug}}$, as:\vspace{-0.08in}
  \begin{align*}
      P_{t+1}(\x,y,y',\delta) \propto\exp\left(\eta\sum_{j=1}^{t}\pairwiseloss{h_j(\x+\delta),y,y'}\right),
  \end{align*}
   \ENDFOR
  \STATE \textbf{Output:} return the classifier $\amclftwo{h}{Q(T)}(\x)$, where  $Q(T)$ is the uniform distribution over $\{h_t\}_{t=1\dots T}$.
\end{algorithmic}
\end{algorithm}

In this section, we present our algorithm $\robboostalg$ for optimizing Equation~\eqref{eqn:robust_boosting_game}.
The pseudocode of this is shown in Algorithm~\ref{alg:mrrob}. 
Define the set $S_{\text{aug}}$ as $ \{(\x,y, y', \delta): (\x,y)\in S, y' \in \cY\setminus\{y\}, \delta \in \gB(\epsilon)\}$. To simplify the notation, we define the following pairwise 0-1 margin loss \vspace{-0.05in}
\begin{align*}
\pairwiseloss{h(\x), y,y'} \coloneqq \mathbb{I}(h(\x) \neq y) - \mathbb{I}(h(\x) \neq y').
\end{align*}\vspace{-0.15in}\\
In the $t$-th round of our algorithm, the following distribution $P_t$ is computed over $S_{\text{aug}}$:\vspace{-0.05in}
\vspace{-1mm}
  \begin{align*}
      P_{t}(\x,y,y',\delta) \propto\exp\Big(\eta\sum_{j=1}^{t-1}\pairwiseloss{h_j(\x+\delta), y,y'}\Big).
  \end{align*}\vspace{-0.15in}\\
Note that $P_t$ uses a distribution over adversarial perturbations rather than simply choose a single adversarial perturbation. Intuitively, for any given $(\x, y, y')$, this distribution places more weight on perturbations that are adversarial to the ensemble constructed till now, and less weight on perturbations that are non-adversarial to the ensemble. 
Once we have $P_t$, a new classifier $h_t$ is computed to minimize the weighted error relative to $P_t$ and added to the ensemble. Learning $h_t$ in this 
way helps us fix the mistakes of the past classifiers, and eventually leads to a robust ensemble.

To get a better understanding of $P_t$, we consider the following optimization problem. It can be easily shown that $P_{t}$ is an optimizer of this problem~\citep{catoni2004statistical, audibert2009fast}:
\vspace{-5mm}
\begin{align*}
    \max_{P' \in \Delta(S_{\text{aug}})} \Eover{ P'}{\sum_{j=1}^{t-1}\pairwiseloss{h_j(\x+\delta), y,y'}} -  \frac{KL(P'||P_1)}{\eta}
\end{align*}\vspace{-0.1in}\\
where $\Delta(S_{\text{aug}})$ is the set of all probability distributions over $S_{\text{aug}}$, and $KL(P'||P_1)$ is the KL divergence between $P',P_1$. Here, $P_1$ is the uniform distribution over $S_{\text{aug}}$. Without the KL term, $P_t$ would have placed all its weight on the worst possible perturbations (\emph{i.e.,} perturbations which fool the ensemble the most). However, the presence of KL term makes $P_t$ assign ``soft weights'' to points in $S_{\text{aug}}$ based on how poorly they are classified by the existing ensemble. This regularization actually plays a key role in the convergence of our algorithm to an optimal ensemble.

\begin{figure}[t]
    \centering
    \includegraphics[scale=0.18]{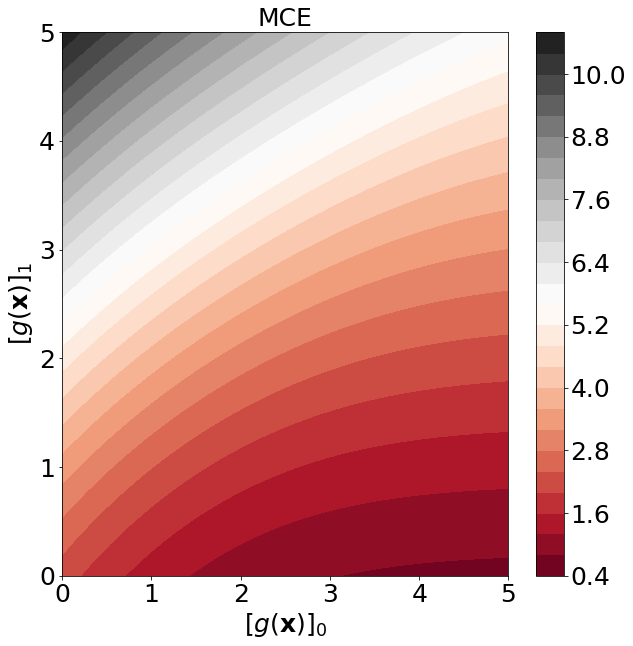}
    \includegraphics[scale=0.18]{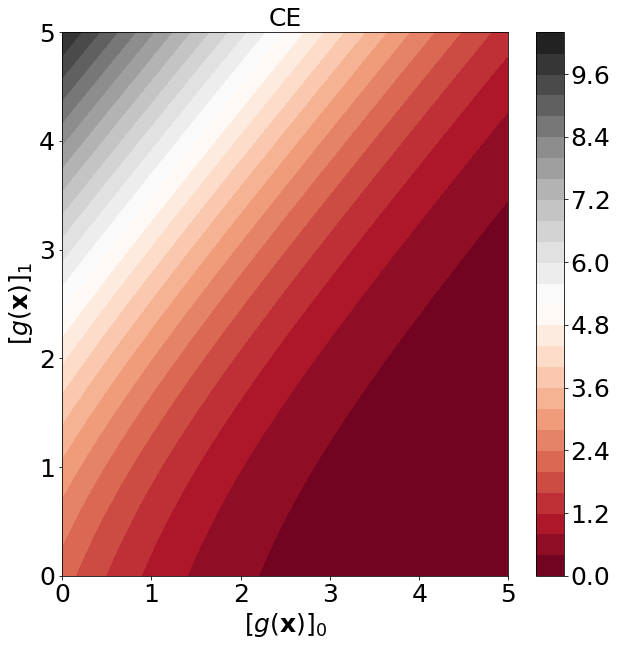}
    \caption{Contour plots of $\cemargin(g(\x),y,y')$ and $2\times\ce(g(\x),y)$ for $K=3, y = 0, y' =1$. The plots show how the loss functions vary with $[g(\x)]_0, [g(\x)]_1$ when we set $[g(\x)]_{2}$ to $0$. It can be seen that the two losses differ significantly on the right side of their plots where $\cemargin$ aggressively penalizes points whose $[g(\x)]_0, [g(\x)]_1$ are close to each other.  
    }
    \label{fig:MCEvsCE}
\end{figure}

Our algorithm has its roots in the framework of online learning~\citep{hazan2016introduction}.
We relegate related details to Appendix~\ref{sec:mrrob_design}.
One important thing to note here is that, when $\gB(\epsilon) = \{0\}$, our algorithm boils down to AdaBoost in binary classification setting, and to AdaBoost.MR in multi-class  setting~\citep{schapire1999improved}.\footnote{$\robboostalg$ and AdaBoost only differ in the choice of $\eta$.} Despite this connection, the analysis and implementation of our algorithm is significantly harder than AdaBoost. 
This is  because $\gB(\epsilon)$ is an infinite set in the adversarial setting, which thus forms a much more challenging zero-sum game.

We now present the following Theorem, which characterizes the rate of convergence of our Algorithm.
\begin{theorem}
\label{thm:convergence_rate_robboost}
Suppose Algorithm~\ref{alg:mrrob} is run with $\eta \leq \frac{1}{2\sqrt{T}}$. Then the ensemble $h_{Q(T)}$ output by the algorithm satisfies:
\vspace{-1mm}
\begin{align*}
    \max_{Q\in \Delta(\H)} \marginrob{Q, S} \leq \marginrob{Q(T), S} + \xi(T),
\end{align*}
where $\xi(T) = 3|\log{\left(nK\text{Vol}(\gB(\epsilon))\right)}|T^{-1/2}$. Moreover, the mixture distribution $P_{\text{avg}} = \sum_{t=1}^T \frac{1}{T}P_t$ satisfies
\vspace{-1mm}
\begin{align*}
    &\max_{P \in \Delta(S_{\text{aug}})}\Eover{(\x,y,y',\delta)\sim P}{\sum_{t=1}^T\frac{1}{T}{\pairwiseloss{h_t(\x+\delta), y, y'}}}\\
    &\leq \Eover{(\x,y,y',\delta)\sim P_{\text{avg}}}{\sum_{t=1}^T\frac{1}{T}{\pairwiseloss{h_t(\x+\delta), y, y'}}} + \xi(T).
\end{align*}
\vspace{-2mm}
\end{theorem}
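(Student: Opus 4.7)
My plan is to view Algorithm~\ref{alg:mrrob} as no-regret dynamics for the two-player zero-sum game implicit in Equation~\eqref{eqn:robust_boosting_game}, and then read off the saddle-point guarantees via standard Hedge/Exponential-Weights regret analysis.

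First, I would introduce the bilinear game value $f(Q,P) = \mathbb{E}_{h\sim Q,\,(\x,y,y',\delta)\sim P}[\pairwiseloss{h(\x+\delta), y, y'}]$ for $Q \in \Delta(\H)$ and $P \in \Delta(S_{\text{aug}})$. Since $\mathbb{E}_{h\sim Q}[\pairwiseloss{h(\x), y, y'}] = [h_Q(\x)]_{y'} - [h_Q(\x)]_y$, a direct computation gives the key identity $\marginrob{Q, S} = -\max_{P \in \Delta(S_{\text{aug}})} f(Q, P)$, which converts statement~1 into a bound on the primal-dual gap $\max_P f(Q(T), P) - \min_Q \max_P f(Q, P)$. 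Because $\Delta(\H)$ and $\Delta(S_{\text{aug}})$ are compact convex and $f$ is bilinear, Sion's minimax theorem allows swapping min and max.

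Next, I would recognize the updates of Algorithm~\ref{alg:mrrob} as no-regret dynamics: the learner plays an exact best response $h_t = \argmin_h \mathbb{E}_{P_t}[\pairwiseloss{h(\x+\delta), y, y'}]$ (zero instantaneous regret), while the adversary plays continuous Exponential Weights, $P_{t+1} \propto \exp(\eta L_t)\,P_1$ with $L_t = \sum_{j \le t}\ell_j$ and $\ell_j(z) = \pairwiseloss{h_j(\x+\delta), y, y'}\in [-1,1]$. The standard potential-function argument with $\Phi_t = \log \int \exp(\eta L_t)\,dP_1$ combined with Hoeffding's lemma for log-MGFs yields $\Phi_t - \Phi_{t-1} \le \eta \mathbb{E}_{P_t}[\ell_t] + \eta^2/2$; summing and applying the Donsker--Varadhan variational identity to lower-bound $\Phi_T$ gives, for every comparator $P^*\in \Delta(S_{\text{aug}})$,
\begin{align*}
\mathbb{E}_{P^*}[L_T] \le \sum_{t=1}^T \mathbb{E}_{P_t}[\ell_t] + \eta T/2 + KL(P^*\|P_1)/\eta.
\end{align*}
Plugging $\eta = 1/(2\sqrt{T})$ and taking the supremum over $P^*$, statement~1 follows by combining with the learner's best-response inequality $\sum_t \mathbb{E}_{P_t}[\ell_t] \le T \min_Q f(Q, P_{\text{avg}})$ and the minimax identity above. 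For statement~2 I would use the Fubini manipulation $\mathbb{E}_{P_{\text{avg}}}[L_T] = T^{-1}\sum_{s,t}\mathbb{E}_{P_s}[\ell_t] \ge \sum_s \mathbb{E}_{P_s}[\ell_s]$, where the inequality comes from the best-response property $\mathbb{E}_{P_s}[\ell_s] \le \mathbb{E}_{P_s}[\ell_t]$ for all $t$; chaining this with the Hedge bound yields $\max_P \mathbb{E}_P[L_T/T] \le \mathbb{E}_{P_{\text{avg}}}[L_T/T] + \xi(T)$.

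The main obstacle is the $\sup_{P^*}$ step, since $\gB(\epsilon)$ has continuous coordinates and a naive point-mass comparator would give $KL(P^*\|P_1) = \infty$. The resolution exploits that each classifier $h_j$ is integer-valued and thus partitions $S_{\text{aug}}$ into finitely many cells on which $L_T$ is constant; I would take $P^*$ uniform on a cell achieving the essential supremum of $L_T$, giving $KL(P^*\|P_1) = \log(\mathrm{Vol}(S_{\text{aug}})/\mathrm{Vol}(\text{cell})) \le |\log(nK\,\mathrm{Vol}(\gB(\epsilon)))|$ up to absolute constants. Balancing the $\eta T/2$ and $KL/\eta$ terms at $\eta = 1/(2\sqrt{T})$ then produces the claimed rate $\xi(T) = 3|\log(nK\,\mathrm{Vol}(\gB(\epsilon)))|\,T^{-1/2}$, with the constant $3$ absorbing the slack introduced by the smoothing. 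Ensuring that the cell inverse-volume can indeed be controlled uniformly in $T$ by a quantity depending only on $n$, $K$, and $\mathrm{Vol}(\gB(\epsilon))$---rather than growing with the refinement of the partition induced by the $T$ classifiers---is the most delicate part of the argument.
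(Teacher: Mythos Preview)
Your overall game-theoretic framing---best response for the learner, exponential weights for the adversary, then combine the two regret bounds---is exactly the paper's approach, and your derivation of statement~2 via the best-response inequality $\E_{P_s}[\ell_s] \leq \E_{P_s}[\ell_t]$ is correct (indeed more explicit than the paper's one-line justification).

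The genuine gap is precisely the one you flag at the end: the Donsker--Varadhan route to the exponential-weights regret does not close over a continuous $S_{\text{aug}}$, and your proposed fix via level-set cells cannot be made to work. After $T$ rounds the classifiers $h_1,\dots,h_T$ jointly partition $S_{\text{aug}}$ into as many as $K^T$ level sets of $L_T$, and nothing in the hypotheses prevents the maximizing cell from having volume that shrinks with $T$ (successive decision boundaries of the $h_t$'s can slice $\gB(\epsilon)$ into arbitrarily thin pieces). Hence $KL(P^*\|P_1) = \log\bigl(\text{Vol}(S_{\text{aug}})/\text{Vol}(\text{cell})\bigr)$ is not bounded by anything depending only on $n$, $K$, and $\text{Vol}(\gB(\epsilon))$; a $T$-dependent term leaks into the rate and cannot be absorbed into the constant $3$.

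The paper avoids this entirely by proving the continuous exponential-weights regret bound through a different mechanism: it invokes the dual view that sampling from $P_t$ is equivalent to $z_t = \argmin_{z} \sum_{j<t} f_j(z) - \sigma(z)/\eta$ for a Gumbel process $\sigma$ over $S_{\text{aug}}$, and then applies a Follow-The-Perturbed-Leader regret decomposition. In that decomposition the comparator term becomes $\frac{1}{\eta}\,\E_\sigma[\sup_z \sigma(z)]$, and for a Gumbel process this expected supremum equals $|\log \text{Vol}(S_{\text{aug}})|$---independent of any partition structure and of $T$. The stability term is handled separately by a total-variation bound between $P_t$ and $P_{t+1}$. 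This is how the paper obtains the clean $|\log(nK\,\text{Vol}(\gB(\epsilon)))|\,T^{-1/2}$ rate without ever touching cell volumes; your KL-based argument would need to be replaced by this FTPL/Gumbel analysis (or an equivalent device) to go through.
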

\vspace{-2mm}
For $\ell_{p}$ balls, $\text{Vol}(B(\epsilon)) = O(\epsilon^{-d})$, and as a result $\xi(T) = O(d\log{(nK/\epsilon})T^{-1/2})$.   
This shows that the ensemble $h_{Q(T)}$ output by the algorithm is $O(dT^{-1/2})$ close to the max-margin solution for $\ell_p$ ball perturbations. The Theorem also shows that the mixture distribution $P_{\text{avg}}$ concentrates around the worst adversarial examples of $h_{Q(T)}$.


\subsection{Practical $\robboostalg$  for Neural Networks}

\begin{algorithm}[tb]
\caption{$\robboostalgnn$}
\label{alg:mrrob_nn}
\begin{algorithmic}[1]
  \small
  \STATE \textbf{Input:} training data $S$, boosting iterations $T$, learning rate $\eta$, SGD iterations $E$, SGD step size $\gamma$, sampling sub-routine: $\sampler$.
  \FOR{$t = 1 \dots T$}
  \STATE  $\theta_t \leftarrow  \begin{cases} \text{random initialization} & (\randinit) \\
  \theta_{t-1} \quad & (\perinit)\\
  \end{cases}$ 
  \FOR{$e = 1 \dots E$}
  \STATE Generate mini-batch
  \begin{align*}
      \{(\x_b, y_b, y_b', \delta_b)\}_{b=1}^B \leftarrow \sampler\left(S, \{\theta_j\}_{j=1}^{t-1}, \eta\right)
  \end{align*}
  \STATE Update $g_{\theta_t}$ using SGD:\vspace{-0.1in}
   \begin{align*}
  \theta_t \leftarrow \theta_t - \frac{\gamma}{B} \sum_{b=1}^B\nabla_{\theta} \cemargin(g_{\theta_t}(\x_b+\delta_b), y_b, y_b').
  \end{align*}\vspace{-0.1in}
  \ENDFOR
   \ENDFOR
  \STATE \textbf{Output:} Let $Q(T)$ be the uniform distribution over $\{g_{\theta_t}\}_{t=1\dots T}$. Output the classifier $\amclftwo{g}{Q(T)}(\x)$. 
\end{algorithmic}
\end{algorithm}

\label{sec:tract_rob_boosting}
Note that lines $4, 5$ of Algorithm~\ref{alg:mrrob} are computationally expensive to implement, especially when $\H, \gB(\epsilon)$ are not finite element sets. This intractability arises due to the presence of the discontinuous margin loss $\pairwiseloss{h(\x),y,y'}$  in the optimization and sampling objectives. We now attempt to make this algorithm computationally tractable by replacing $\pairwiseloss{h(\x),y,y'}$  with a smooth and differentiable surrogate loss. 
We focus on neural network score-based base classifiers: $\G = \{g_{\theta}:\theta \in \Theta \subseteq \R^D\},$ where $g_{\theta}:\cX \to \R^K$ is a neural network parameterized by $\theta$. 

To begin with, we introduce the following differentiable surrogate for pairwise margin loss $\pairwiseloss{h(\x),y,y'}$ (which serves a key role in our boosting framework), and which we refer to as margin cross entropy loss:\vspace{-0.03in}
\begin{align*}
\label{eqn:mce}
    \cemargin(g_{\theta}(\x),y,y') \coloneqq \ce(g_{\theta}(\x),y) + \ce(-g_{\theta}(\x),y').
\end{align*}\vspace{-0.15in}\\

For binary classification, $\cemargin$ is equivalent to $\ce$ (to be precise, $\cemargin = 2 \times \ce$). 
However, both the losses differ when $K > 2$. 
Our margin cross entropy loss encourages $[g_{\theta}(\x)]_y$ to be large while simultaneously forcing $[g_{\theta}(\x)]_{y'}$ to be small, thus increasing the pairwise margin between the two logits. This is in contrast to standard cross entropy loss which doesn't have an explicit term for $y'$ and doesn't necessarily increase the pairwise margin (see Figure~\ref{fig:MCEvsCE}). In our experiments, we show that $\cemargin$ loss is interesting even outside the context of margin boosting. 
In particular, we show that using $\cemargin$ in the objectives of existing defenses 
significantly improves their performance.



\begin{remark}
Several recent works have showed that performing logistic regression on linearly separable data leads to max-margin solutions~\citep{soudry2018implicit}. At a first glance, these results seem to suggest that $\ce$ loss suffices for max-margin solutions. However, it should be noted that these works study linear classifiers in the binary classification setting. It is not immediately clear if these results extend to non-linear classifiers in the multi-class setting. 
\end{remark}

\begin{algorithm}[t]
\caption{$\samplerexp$ (Exponential)}
\label{alg:sampler_exp}
\begin{algorithmic}[1]
  \small
  \STATE \textbf{Input:} training data $S$, base models $\{\theta_j\}_{j=1}^{t-1}$, learning rate $\eta$.
  \STATE Randomly sample a batch of points $\{(\x_b, y_b, y_b', \delta_b)\}_{b=1}^B$ from the following distribution: \vspace{-0.08in}
  \begin{align*}
      \hspace{-0.07in} P_{t}(\x,y,y',\delta) \propto \exp\left(\eta \cemargin\left(\sum_{j=1}^{t-1}g_{\theta_j}(\x+\delta), y, y'\right)\right).
  \end{align*}
  \STATE \textbf{Output:}  $\{(\x_b, y_b, y_b', \delta_b)\}_{b=1}^B$
\end{algorithmic}
\end{algorithm}
\begin{algorithm}[tb]
\caption{$\samplerall$}
\label{alg:sampler_all}
\begin{algorithmic}[1]
  \small
  \STATE \textbf{Input:} training data $S$, base models $\{\theta_j\}_{j=1}^t$.
  \STATE  $\widehat{S}_B \leftarrow \{\}$
  \STATE Uniformly sample a batch of points $\{(\x_b, y_b)\}_{b=1}^B$ from $S$.
  \FOR{$b = 1 \dots B$}
  \STATE compute $\delta_b$ as
  \begin{align*}
  &\delta_b \in \argmax_{\delta \in \gB(\epsilon)}\sum_{y'\in \cY\setminus\{y_b\}} \cemargin\left(\sum_{j=1}^{t}g_{\theta_j}(\x_b+\delta), y_b, y'\right)
  \end{align*}
  \STATE $\widehat{S}_B \leftarrow \widehat{S}_B \cup \{(\x_b, y_b, y', \delta_b)\}_{y' \in \cY \setminus \{y_b\}}.$
  \ENDFOR
  \STATE \textbf{Output:}  $\widehat{S}_B$
\end{algorithmic}
\vspace*{-0.5mm}
\end{algorithm}



\begin{table*}[t]
\centering
\caption{
\centering
Experiments with ResNet-18 on different datasets. $\dagger$ denotes that the results are from the 
last epoch checkpoint.
}
\label{tab:res18_results}
\centering
\begin{center}
\scriptsize
\begin{sc}
\begin{tabular}{c|cccccccccccc}
\toprule 
\multirow{2}{*}{Method} & \multicolumn{4}{c}{SVHN} &  \multicolumn{4}{c}{CIFAR-10} &  \multicolumn{4}{c}{CIFAR-100} \\
\cmidrule(lr){2-5} \cmidrule(lr){6-9} \cmidrule(lr){10-13} & Clean & Adv & Clean$^\dagger$ & Adv$^\dagger$ & Clean & Adv & Clean$^\dagger$ & Adv$^\dagger$ & Clean & Adv & Clean$^\dagger$ & Adv$^\dagger$ \\
\midrule 
\midrule 
AT & $89.07$ & $52.09$ & $90.60$ & $42.76$ & $\mathbf{82.06}$ & $50.95$  & $84.38$ & $46.39$  & $54.49$ & $27.09$ & $57.22$ & $22.96$ \\
AT + MCE & $\mathbf{89.69}$ & $\mathbf{53.25}$ & $\mathbf{90.89}$ & $\mathbf{46.78}$ & $\mathbf{}81.13$ & $\mathbf{51.86}$ & $84.37\mathbf{}$ & $\mathbf{48.47} $ & $54.37$ & $\mathbf{28.25}$ & $57.01$ & $\mathbf{24.93}$ \\
\bottomrule
\end{tabular}
\end{sc}
\end{center}
\vspace{-0.5cm}
\end{table*}
\begin{table*}[th]
\caption{
Experiments with WideResNet-34-10 on CIFAR10.}
\label{tab:wrn34_results}
\centering
\begin{small}
\begin{sc}
\setlength{\tabcolsep}{3mm}
\begin{tabular}{l|cccccc}
\toprule
Method & Clean & FGSM & CW & PGD-20 & PGD-100 & AutoAttack  \\
\midrule
\midrule
AT & $\mathbf{86.31}$ & $64.01$ & $53.28$ & $54.12$ & $53.75$ & $50.13$  \\
AT + MCE &  $85.56$ & $64.20$ & $53.46$ & $\mathbf{55.40}$ & $\mathbf{55.14}$ & $\mathbf{52.07}$ \\
\midrule
TRADES & $83.25$ & $62.48$ & $49.51$ & $54.97$ & $54.80$ & $51.92$ \\
TRADES + MCE & $84.76$ & $\mathbf{64.63}$ & $49.49$ & $\mathbf{56.23}$ & $\mathbf{55.99}$ & $52.40$ \\
\midrule
MART & $83.12$ & $63.68$ & $52.57$ & $55.75$ & $55.49$ & $50.85$ \\
MART + MCE & $\mathbf{83.65}$ & $\mathbf{64.3}$ & $\mathbf{54.24}$ & $\mathbf{56.31}$ & $\mathbf{56.15}$ & $\mathbf{52.81}$ \\
\midrule
GAIR & $83.91$ & $65.79$ & $49.44$ & $58.99$ & $58.97$ & $44.04$ \\
GAIR + MCE & $\mathbf{84.55}$ & $\mathbf{67.96}$ & $\mathbf{49.94}$ & $\mathbf{61.79}$ & $\mathbf{61.93}$ & $44.22$ \\
\midrule
AWP &  $\mathbf{85.32}$ & $65.89$ & $55.40$ & $57.37$ & $57.08$ & $53.67$\\
AWP + MCE & $84.97$ & $\mathbf{66.53}$ & $\mathbf{56.23}$ & $\mathbf{58.40}$ & $\mathbf{58.12}$ & $\mathbf{54.69}$ \\
\bottomrule
\end{tabular}
\end{sc}
\end{small}
\vspace*{-3mm}
\end{table*}

We now get back to Algorithm~\ref{alg:mrrob} and replace the margin loss $\pairwiseloss{h(\x),y,y'}$ in the Algorithm with the surrogate loss $\cemargin$. In particular, we replace line 4 in Algorithm~\ref{alg:mrrob} with the following
\begin{align*}
    \min_{\theta} \mathbb{E}_{(\x,y, y', \delta) \sim P_{t}}[\cemargin(g_{\theta}(\x+\delta), y,y')].
    \vspace{-2mm}
\end{align*}
We solve this optimization problem using SGD. 
In each iteration of SGD, we randomly sample a mini-batch according to the probability distribution $P_{t}$ and descend along the gradient of the mini-batch. 
One caveat here is that sampling from $P_{t}$ can be computationally expensive due to the presence of $\pairwiseloss{h(\x),y,y'}$ loss. 
To make this tractable, we 
replace it with $\cemargin$ and 
sample from the following distribution\vspace{-0.1in}\\
\begin{align*}
      &P_{t}(\x,y,y',\delta) \propto \exp\left(\eta \cemargin\left(\sum_{j=1}^{t-1}g_{\theta_j}(\x+\delta), y, y'\right)\right).
  \end{align*}\vspace{-0.15in}\\
Algorithm~\ref{alg:mrrob_nn}, when invoked with Algorithm~\ref{alg:sampler_exp} as the sampling sub-routine, describes this procedure. Notice that in line 3 of Algorithm~\ref{alg:mrrob_nn}, we consider two different initializations for the $t$-th base classifier $\theta_t$: $\randinit$, and $\perinit$. In $\randinit$, we randomly initialize $\theta_t$ using techniques such as Xavier initialization. In $\perinit$ (Persistent Initialization), we initialize $\theta_t$ to $\theta_{t-1}$. This makes $\theta_t$ stand on the shoulders of its precursors, and benefits its training.

While this algorithm is computationally tractable, it can be further improved by relying on the following heuristics:

\vspace{-1mm}
\begin{itemize}[leftmargin=*]
    \item \textbf{Aggressive Defense.} 
    Note that $P_t$ in $\samplerexp$ (Algorithm~\ref{alg:sampler_exp}) relies only on $\{\theta_j\}_{j=1}^{t-1}$ and not on $\theta_t$. 
    That is, the $t$-th base classifier ${\theta_t}$ is trained to be robust only to the ``soft'' adversarial examples generated from the classifiers $\{{\theta_j}\}_{j=1}^{t-1}$.
    In our experiments, we noticed that making $\theta_t$ to be also robust to its own adversarial examples makes the optimization more stable and improves its convergence speed. So, we make our sampler rely on $\theta_t$ as well.
    \vspace{-0.05in}
    \item \textbf{Efficient Samplers.} The key computational bottleneck in our algorithm is the sampler in Algorithm~\ref{alg:sampler_exp}. In order to scale up our algorithm to large datasets, we replace the sampling sub-routine in Algorithm~\ref{alg:sampler_exp} with a more efficient sampler described in Algorithm~\ref{alg:sampler_all}. This sampler approximates the ``soft weight'' assignments via appropriate ``hard weights''.
    To be precise, we first uniformly sample $(\x,y)$ from $S$ and find a perturbation whose pairwise margin loss w.r.t all classes $y' \in \cY\setminus \{y\}$ is the highest, and use it to train our base classifier.  In our experiments, we tried two other samplers - $\samplerrnd, \samplermax$ - which differ in the way they perform the hard weight assignment (see Appendix~\ref{sec:samplers_efficient} for a more thorough discussion on these samplers). We noticed that $\samplerall$ is the best performing variant among the three, and we use it in all our experiments.   \vspace{-0.05in}
\end{itemize}

We provide a PyTorch-style pseudocode of our algorithm ($\samplerall$ variant) for training a single network: 

\definecolor{codegreen}{rgb}{0,0.6,0}
\definecolor{codegray}{rgb}{0.5,0.5,0.5}
\definecolor{codepurple}{rgb}{0.58,0,0.82}
\definecolor{backcolour}{rgb}{0.95,0.95,0.92}
\lstdefinestyle{mystyle}{
    backgroundcolor=\color{backcolour},   
    commentstyle=\color{codegreen},
    keywordstyle=\color{magenta},
    numberstyle=\tiny\color{codegray},
    stringstyle=\color{codepurple},
    basicstyle=\ttfamily\scriptsize,
    breakatwhitespace=false,         
    breaklines=true,                 
    captionpos=b,                    
    keepspaces=true,                 
    numbersep=5pt,                  
    showspaces=false,                
    showstringspaces=false,
    showtabs=false,                  
    tabsize=2
}
\lstset{style=mystyle}
\begin{lstlisting}[language=Python]
# net: classification neural network 
# attack: Sampler.ALL adversarial attack (uses PGD)
# num_classes: the number of classes for the task

import torch.nn.functional as F
for inputs, targets in trainloader:
    adv_inp = attack(net, inputs, targets, num_classes)
    
    # standard adversarial training
    adv_outputs = net(adv_inp)
    loss = F.cross_entropy(adv_outputs, targets)
    
    if use_MCE:  # our method
        loss2 = - F.log_softmax(-adv_outputs, dim=1)
        loss2 *= (1 - F.one_hot(targets, num_classes))
        loss2 = loss2.mean() / (num_classes - 1)
        loss = loss + loss2
    
    # optimization step
    optimizer.zero_grad()
    loss.backward()
    optimizer.step()
\end{lstlisting}
As can be seen from the pseudocode, our algorithm does not need extra forward passes.
It only performs a few simple operations in the logit (``adv\_outputs") space. The $\samplerall$ attack ($1^{\text{st}}$ line in \texttt{for} loop) has similar runtime as the standard PGD attack and can be implemented using similar logic as in the pseudocode above. 
Therefore, our algorithm has only a slight computational overhead over baselines 
(see a comparison in Section~\ref{sec:efficiency}).






\vspace*{-2mm}
\section{Experiments}
\label{sec:exps}

\begin{table*}[th]
\caption{Boosting experiments with ResNet-18 being the base classifier.
}
\label{tab:boosting}
\centering
\begin{small}
\begin{sc}
\begin{tabular}{l|cccccccccc}
\toprule
\multirow{2}{*}{ Method } & \multicolumn{2}{c}{ Iteration 1 } & \multicolumn{2}{c}{ Iteration 2 }  & \multicolumn{2}{c}{ Iteration 3 }  & \multicolumn{2}{c}{ Iteration 4 }  & \multicolumn{2}{c}{ Iteration 5 } \\
&  Clean & Adv&  Clean & Adv&  Clean & Adv&  Clean & Adv&  Clean & Adv \\
\midrule
\midrule
Wider model & $82.61$ & $ 51.73$ & --- & --- & --- & --- & --- & --- & --- & ---  \\
Deeper model &  $82.67$ & $52.32$  & --- & --- & --- & --- & --- & --- & --- & ---  \\
\midrule
$\abernehty + \randinit$ & $82.00$ & $51.05$ & $84.58$ & $49.95$ & $83.87$ & $51.66$ & $82.56$ & $52.72$ & $81.44$ & $52.92$ \\
$\abernehty+\perinit$ & $82.18$ & $50.97$ & $85.60$ & $50.13$ & $84.59$ & $51.77$ & $84.21$ & $52.79$ & $82.78$ & $53.28$ \\
$\robboostalgnn + \randinit$ & $81.04$ & $51.83$ & $84.61$ & $52.68$ & $84.93$ & $53.51$ & $85.01$ & $53.95$ & $85.35$ & $54.13$ \\
$\robboostalgnn+\perinit$ & $81.34$ & $51.92$ & $84.97$ & $52.97$ & $85.28$ & $53.62$ & $85.99$ & $54.26$ & $86.16$ & $54.42$ \\
\bottomrule
\end{tabular}
\end{sc}
\end{small}
\end{table*}

In this section, we present experimental results showing the effectiveness of the proposed boosting technique. In addition, we demonstrate the efficacy of the proposed loss ($\cemargin$) for standard adversarial training.
Our focus here is on the $\ell_\infty$ threat model (\emph{i.e.,} $\gB(\epsilon)$ is the $\ell_{\infty}$ norm ball).

\subsection{Effectiveness of $\cemargin$}
\label{sec:single_model_experiment}

We first demonstrate the effectiveness of $\cemargin$ for training a single robust model. Here, we compare AT~\citep{madry2017towards} with AT + MCE, which is a defense technique obtained by replacing $\ce$ in AT with $\cemargin$, and relying on $\samplerall$ to generate mini-batches during training. We consider three datasets: SVHN, CIFAR-10 and CIFAR-100.
We train ResNet-18 using SGD with $0.9$ momentum for $100$ epochs.
The initial leaning rate is set to $0.1$ and it is further decayed by the factor of $10$ at the $50$-th and $75$-th epoch.
The batch size is set to $128$ in this work.
We also use a weight decay of $5 \times 10^{-4}$.
For the $\ell_\infty$ threat model, we use $\epsilon=8/255$.
The step size of attacks is $1/255$ for SVHN and $2/255$ for CIFAR-10 and CIFAR-100.
PGD-10~\citep{madry2017towards} attack is used for adversarial training and PGD-20 is used during testing period.
In Table~\ref{tab:res18_results}, we report both the best checkpoint model as well as the last checkpoint model, the gap between which is known as the robust overfitting phenomenon \citep{Rice2020OverfittingIA}.
For each checkpoint, we report its clean accuracy and adversarial accuracy.
We use bold numbers when the accuracy gap is $>0.20\%$.
We can see that AT + MCE consistently improves the robustness of AT across different datasets, with only a tiny drop in clean accuracy.
This indicates that $\cemargin$ is broadly applicable even outside the context of boosting.  Note that although the computation of adversarial perturbations in $\samplerall$ involves all the classes $y'\in \cY \setminus \{y\}$, it can be computed with only one forward pass, resulting in less than $5\%$ increase in runtime.

We now train larger capacity models,
as they lead to state-of-the-art results in the literature.
Concretely, we use WideResNet-34-10 on CIFAR-10 and use the same setting as \citet{madry2017towards}.
We consider a number of state-of-the-art baseline algorithms:
1) AT \citep{madry2017towards};
2) TRADES \citep{zhang2019theoretically};
3) MART \citep{wang2019improving};
4) AWP \citep{Wu2020AdversarialWP},
and
5) GAIR \citep{Zhang2020GEOMETRYAWAREIA}.
We combine these defenses with $\cemargin$ to improve their robustness (see Appendix~\ref{sec:mce_variants} for more details).
We use the same hyperparameters as stated previously, to train these defenses.
With regard to attacking the trained models, we choose FGSM, CW$_\infty$ \citep{Carlini2017TowardsET}, PGD-20 and PGD-100 attacks with $\epsilon=8/255$.
Additionally, we test the model performance against the AutoAttack \citep{Croce2020ReliableEO}, which is a strong and reliable evaluation suite;
stable performances under such threat models often show that the robustness of our defenses is not caused by the ``obfuscated gradient'' phenomenon \citep{Athalye2018ObfuscatedGG}.
The results from this experiment are reported in Table \ref{tab:wrn34_results}.
It can be seen that the robustness of all the baseline algorithms can be improved using $\cemargin$, demonstrating the effectiveness of the proposed loss.
We also notice that for GAIR, there is a remarkable drop in performance on the AutoAttack. 
This suggests that some form of obfuscated gradients could potentially be the reason behind its robustness.
            
\vspace*{-2mm}
\subsection{Effectiveness of Margin-Boosting}
We now present experimental results showing the effectiveness of $\robboostalgnn$. In our experiments, we use $\samplerall$ to generate mini-batches (see  Appendix~\ref{sec:more_results} for more details). 
We consider the following baselines:
1) larger models trained end-to-end using AT, and 
2) $\abernehty$, which is the boosting algorithm of~\citet{abernethy2021multiclass} (see Appendix~\ref{sec:abernethy} for the details).
For the boosting techniques, we set the total number of boosting iterations to $5$, and choose ResNet-18 as our base classifier. For end-to-end trained larger models, we consider:  1) an ensemble of five ResNet-18 models (``wider model''), and 2) a ResNet-152 model which has slightly larger number of parameters than the wider model (``deeper model''). 
 The hyperparameter settings, including the threat model setup and optimization schedule for each boosting iteration, are identical to those used in the earlier experiments. 

Table~\ref{tab:boosting} presents the results from this experiment. 
For all the techniques, we report the best robustness checkpoint results after each boosting iteration (for end-to-end trained models, number of boosting iterations is $1$, and longer training could not bring further improvement \citep{Pang2021BagOT}).
Several conclusions can be drawn from Table \ref{tab:boosting}.
Firstly, $\robboostalgnn$ has significantly better performance than all the baselines, on both clean and adversarial accuracies. This shows the effectiveness of our margin-boosting framework over the boosting framework of \citet{abernethy2021multiclass}. It also shows that boosting techniques can outperform end-to-end trained larger networks. 
Next, persistent initialization ($\perinit$) not only improves $\robboostalgnn$, but also $\abernehty$ (\citet{abernethy2021multiclass} only study $\randinit$ in their work). This makes it a helpful technique for boosting in the context of deep learning.

\vspace*{-2mm}
\section{Conclusion and Future Work
}
\label{sec:conclusion}
We proposed a margin-boosting framework for building robust ensembles. Theoretically, we showed that our boosting framework is optimal and derived a computationally efficient boosting algorithm ($\robboostalgnn$) from our framework. Through extensive empirical evaluation, we showed that our algorithm outperforms both existing boosting techniques and larger models trained end-to-end. 

\textbf{Future Work.}
We believe the performance of our algorithm can be further improved by designing better samplers, and in particular, faster techniques to implement Algorithm~\ref{alg:sampler_exp}. Next, it'd be interesting to understand, both theoretically and empirically, why boosting techniques outperform end-to-end trained larger models.

\section*{Acknowledgement}
We thank Chen Dan, Feng Zhu and Mounica for helpful discussions.
Dinghuai Zhang thanks the never-ending snow storm in Montreal for preventing him from any form of outdoor activity.
Hongyang Zhang is supported in part by an NSERC Discovery Grant.
Aaron Courville thanks the support of Microsoft Research, Hitachi and CIFAR.
Yoshua Bengio acknowledges the funding from CIFAR, Samsung, IBM and Microsoft.
Pradeep Ravikumar acknowledges the support of ARL and NSF via IIS-1909816.

\bibliography{ref}
\bibliographystyle{icml2022}

\newpage
\appendix
\onecolumn

\section{Notation}
\label{sec:app_notation}

\begin{center}
 \begin{tabular}{l | l} 
 \toprule
 Symbol & Description \\ [0.5ex] 
 \midrule
 $\x$ & feature vector \\ 
 $y$ & label \\
 $\cX$ & domain of feature vector \\ 
 $\cY$ & domain of the label \\
 $K$ & number of classes in multi-class classification problem\\
 $S$ & training data set\\
 $P$ & true data distribution\\
 $P_n$ & empirical distribution\\
 \midrule
  $h:\cX\to \cY$ &  standard classifier\\
 $g:\cX\to \R^K$ & score based classifier\\
 $\ell_{0-1}$ & $0/1$ classification loss\\
 $\ell$ & convex surrogate of $\ell_{0-1}$\\
 $\ce$ & cross-entropy loss\\
  $\cemargin $& margin cross-entropy loss\\
  $R(g; \ell)$ & population risk of classifier $g$, measured w.r.t $\ell$\\
 $\emprisk(g; \ell)$ & empirical risk of classifier $g$,  measured w.r.t $\ell$\\
 \midrule
 $\gB(\epsilon)$ & Set of valid perturbations of an adversary\\
  $S_{\text{aug}}$ & $\{(\x,y, y', \delta): (\x,y)\in S, y' \in \cY\setminus\{y\}, \delta \in \gB(\epsilon)\}$\\
 $\Tilde{S}_{\text{aug}}$ & $\{(\x,y, y'): (\x,y)\in S, y' \in \cY\setminus\{y\}\}$\\
 $\advrisk(g; \ell)$ & population adversarial risk of classifier $g$, measured w.r.t $\ell$\\
 $\empadvrisk(g; \ell)$ & empirical adversarial risk of classifier $g$, measured w.r.t $\ell$\\
 \midrule
 $\H$ & compact set of base classifiers\\
 $\G$ & compact set of score-based base classifiers\\
 $\Delta(\H)$ & set of all probability distributions over $\H$\\
 $Q$ & probability distribution over base classifiers $\H$\\
 $h_{Q}$ & \begin{tabular}{@{}l@{}}ensemble constructed by placing probability distribution $Q$ \\over elements in $\H$\end{tabular}\\
 \bottomrule
\end{tabular}
\end{center}

\begin{table*}[th]
 \caption{Margin related terminology}
 \label{tab:margin_terminology}
\begin{center}
 \begin{tabular}{l | l} 
 \toprule
 Symbol & Description \\ [0.5ex] 
 \midrule
  $\pairwiseloss{h(\x), y,y'}$ & $ \mathbb{I}(h(\x) \neq y) - \mathbb{I}(h(\x) \neq y')$\\
  $\margin{Q, \x,y}$ & $[h_Q(\x)]_y - \max_{y' \neq y} [h_Q(\x)]_{y'}$\\
 $\margin{Q, S} $ & $\min_{(\x,y)\in S}\margin{Q, \x, y}$\\
 $\marginrob{Q, S}  $ & $\min_{(\x,y)\in S}\min_{\delta \in \gB(\epsilon)}\margin{Q, \x+\delta, y}$\\
 \bottomrule
 \end{tabular}
 \end{center}
 \end{table*}

\section{Proofs of Section~\ref{sec:wl_condition}}
\label{sec:wl_condition_proofs}
\subsection{Intermediate Results}
We first present the following intermediate result that helps us prove  Theorems~\ref{thm:wl_condition},~\ref{thm:optimal_boosting_framework} and Proposition~\ref{prop:wl_conditions_comparison}.
\begin{lemma}
\label{lem:equiv_wl_condition}
The following three statements are equivalent:
\begin{enumerate}
    \item There exists a $Q\in \Delta(\H)$ such that the ensemble $\amclftwo{h}{Q}$ achieves $100\%$ adversarial accuracy on the training set
    $S$.
    \item For any maximizer $Q^*$ of Equation~\eqref{eqn:robust_boosting_game}, the ensemble $\amclftwo{h}{Q^*}$ achieves $100\%$ adversarial accuracy on
    the training set 
    $S$.
    \item The hypothesis class $\H$ satisfies the following condition: for any probability distribution $P'$ over points in the set \mbox{$S_{\text{aug}} \coloneqq \{(\x,y, y', \delta): (\x,y)\in S, y' \in \cY\setminus\{y\}, \delta \in \gB(\epsilon)\}$}, there exists a classifier $h\in \H$ which achieves slightly-better-than-random performance on $P'$
\begin{align*}
    \mathbb{E}_{(\x,y, y', \delta) \sim P'}[\indct{h(\x+\delta)= y}] \geq  \mathbb{E}_{(\x,y, y', \delta) \sim P'}[\indct{h(\x+\delta)= y'}] + \tau.
\end{align*}
Here $\tau > 0$ is some constant.
\end{enumerate}
\end{lemma}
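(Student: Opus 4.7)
The plan is to establish $2 \Rightarrow 1$ trivially, and to reduce both Statements~1 and~2 to the single condition that the optimal value $v^\star$ of Equation~\eqref{eqn:robust_boosting_game} is strictly positive. Then I would use a minimax argument to show that $v^\star > 0$ is equivalent to Statement~3.

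First I would rewrite the game's objective in a convenient form. Using $[h_Q(\x+\delta)]_y = \mathbb{E}_{h \sim Q}[\mathbb{I}(h(\x+\delta)=y)]$, the inner difference equals
\[
\mathbb{E}_{h \sim Q}\bigl[\mathbb{I}(h(\x+\delta)=y) - \mathbb{I}(h(\x+\delta)=y')\bigr] \;=\; -\,\mathbb{E}_{h\sim Q}\!\left[\pairwiseloss{h(\x+\delta), y, y'}\right].
\]
Since the inner objective is linear in a distribution over $S_{\text{aug}}$, the minimum over points of $S_{\text{aug}}$ coincides with the minimum over $\Delta(S_{\text{aug}})$, giving
\[
v^\star = \max_{Q \in \Delta(\H)} \; \min_{P' \in \Delta(S_{\text{aug}})} \; \mathbb{E}_{(\x,y,y',\delta) \sim P'}\,\mathbb{E}_{h \sim Q}\bigl[\mathbb{I}(h(\x+\delta)=y) - \mathbb{I}(h(\x+\delta)=y')\bigr].
\]

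Next I would verify the premises of Sion's minimax theorem: both $\Delta(\H)$ and $\Delta(S_{\text{aug}})$ are convex, the objective is bilinear (hence concave-convex), and the relevant compactness follows from the stated compactness of $\H$ and $\gB(\epsilon)$. Swapping min and max, and observing that the inner maximum over $Q$ is attained at a pure strategy $h \in \H$, yields
\[
v^\star = \min_{P' \in \Delta(S_{\text{aug}})} \; \max_{h \in \H} \; \mathbb{E}_{(\x,y,y',\delta)\sim P'}\bigl[\mathbb{I}(h(\x+\delta)=y) - \mathbb{I}(h(\x+\delta)=y')\bigr].
\]
From this dual expression, $v^\star > 0$ is equivalent to Statement~3: one direction by taking $\tau = v^\star$, and the other by noting that the weak-learning condition forces the min-max value to be at least $\tau$.

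Finally I would connect Statements~1 and~2 to the sign of $v^\star$. If some $Q$ makes $\amclftwo{h}{Q}$ correct on every adversarial perturbation of $S$, then $y$ is the strict $\argmax$ of $h_Q(\x+\delta)$ for every $(\x,y)\in S$ and every $\delta \in \gB(\epsilon)$, which is precisely $\marginrob{Q,S} > 0$, hence $v^\star > 0$. Conversely, if $v^\star > 0$, then any maximizer $Q^\star$ attains $\marginrob{Q^\star,S} \geq v^\star > 0$, so $\amclftwo{h}{Q^\star}$ achieves $100\%$ adversarial accuracy; the direction $2 \Rightarrow 1$ is immediate. The main obstacle I anticipate is the careful justification of Sion's theorem in the presence of the continuous perturbation set $\gB(\epsilon)$ and an abstract compact hypothesis class $\H$; this requires, strictly speaking, a choice of topology on $\H$ in which evaluation $h \mapsto h(\x+\delta)$ and the integral against $P'$ are continuous, but since the integrand is bounded in $[-1,1]$ and $S_{\text{aug}}$ is compact, one can equivalently work with the finite-dimensional bilinear game on $\Delta(\H) \times \Delta(S_{\text{aug}})$ indexed by the pairwise loss values $\pairwiseloss{h(\x+\delta),y,y'}$, for which the classical minimax theorem applies directly.
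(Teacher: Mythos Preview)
Your proposal is correct and follows essentially the same route as the paper: both reduce Statements~1 and~2 to the condition $\max_{Q}\marginrob{Q,S}>0$ via the strict-$\argmax$ characterization of $100\%$ adversarial accuracy, and then obtain the equivalence with Statement~3 by linearizing the min over $S_{\text{aug}}$, invoking Sion's minimax theorem (using compactness of $\H$ and $\gB(\epsilon)$), and collapsing the inner $\max_{Q\in\Delta(\H)}$ back to a pure $h\in\H$. The only cosmetic difference is that you organize the argument around the single quantity $v^\star$, whereas the paper writes out the chain of $\iff$'s explicitly; the mathematical content is identical.
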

\begin{proof}
To prove the Lemma, it suffices to show that $(1) \iff (2), (2) \iff (3)$.
\begin{itemize}
    \item \textbf{Proof of $(1) \implies (2)$.} Let $Q$ be the ensemble which achieves $100\%$ adversarial accuracy on $S$. We first show that this is equivalent to: $\marginrob{Q,S} > 0$. To see this, first note that the following should hold for any $(\x,y) \in S$
    \[
    \amclftwo{h}{Q}(\x+\delta) = y, \quad \text{for all }\delta \in \gB(\epsilon).
    \]
     Now consider the following, for any $(\x,y) \in S$
    \begin{align*}
        &\forall \delta \in \gB(\epsilon), \ \amclftwo{h}{Q}(\x+\delta) = y \\
        & \stackrel{(a)}{\iff} \forall \delta \in \gB(\epsilon), \ [h_Q(\x+\delta)]_y > \max_{y'\neq y}  [h_Q(\x+\delta)]_{y'}\\
        & \iff \min_{\delta \in \gB(\epsilon)}  [h_Q(\x+\delta)]_y - \max_{y' \neq y} [h_Q(\x+\delta)]_{y'} > 0\\
        & \iff \min_{\delta \in \gB(\epsilon)} \margin{Q, \x+\delta,y} > 0,
    \end{align*}
    where $(a)$ follows from the definition of $\amclftwo{h}{Q}(\x).$   
    Since the last statement in the above display holds for any $(\x,y) \in S$, we have
    \begin{align*}
    &\min_{(\x,y)\in S}\min_{\delta \in \gB(\epsilon)} \margin{Q, \x+\delta,y} > 0\\
    &\iff \marginrob{Q, S} > 0.
    \end{align*}
    This shows that statement $(1)$ is equivalent to: $\marginrob{Q, S} > 0$. 
    Now, let $Q^*$ be any maximizer of Equation~\eqref{eqn:robust_boosting_game}. Since $\marginrob{Q^*, S} \geq \marginrob{Q, S} $ and $\marginrob{Q, S} > 0$, we have
    \[
    \marginrob{Q^*, S} > 0.
    \]
    From our above argument, we know that this is equivalent to saying that the ensemble $\amclftwo{h}{Q^*}$ achieves $100\%$ adversarial accuracy on training set $S$. This shows that $(1) \implies (2).$
    \item \textbf{Proof of $(2) \implies (1)$.} This statement trivially holds.  
    \item \textbf{Proof of $(2) \iff (3)$.} Suppose $\H$ satisfies the weak learning condition stated in statement (3). Then we have
    \begin{align*}
        &\forall P' \in \Delta(S_{\text{aug}}), \exists h \in \H, \text{ such that } \mathbb{E}_{(\x,y, y', \delta) \sim P'}[\indct{h(\x+\delta)= y} - \indct{h(\x+\delta)= y'}] \geq \tau\\
        &\iff \forall P' \in \Delta(S_{\text{aug}}),\   \max_{h \in \H}\mathbb{E}_{(\x,y, y', \delta) \sim P'}[\indct{h(\x+\delta)= y} - \indct{h(\x+\delta)= y'}] \geq \tau\\
        & \iff \min_{P' \in \Delta(S_{\text{aug}})} \max_{h \in \H}\mathbb{E}_{(\x,y, y', \delta) \sim P'}[\indct{h(\x+\delta)= y} - \indct{h(\x+\delta)= y'}] \geq \tau\\
        & \stackrel{(a)}{\iff} \min_{P' \in \Delta(S_{\text{aug}})} \max_{Q \in \Delta(\H)}\Eover{(\x,y, y', \delta) \sim P', h \sim Q}{\indct{h(\x+\delta)= y} - \indct{h(\x+\delta)= y'}} \geq \tau\\
        & \stackrel{(b)}{\iff} \max_{Q \in \Delta(\H)} \min_{P' \in \Delta(S_{\text{aug}})} \Eover{h \sim Q, (\x,y, y', \delta) \sim P'}{\indct{h(\x+\delta)= y} - \indct{h(\x+\delta)= y'}} \geq \tau\\
        & \iff \max_{Q \in \Delta(\H)} \min_{(\x,y, y', \delta) \in S_{\text{aug}}} \Eover{h \sim Q}{\indct{h(\x+\delta)= y} - \indct{h(\x+\delta)= y'}} \geq \tau,
    \end{align*}
    where $(a)$ follows since the optimum of a linear program over a convex hull can always be attained at an extreme point, and (b) follows from our compactness assumptions on $\H, \gB(\epsilon)$ and from Sion's minimax theorem which says that the max-min and min-max values of convex-concave games over compact domains are equal to each other~\cite{sion1958general}. (b) can also be obtained using Ky Fan's minimax theorem~\citep{fan1953minimax}. Continuing, we get
    \begin{align*}
        &\forall P' \in \Delta(S_{\text{aug}}), \exists h \in \H, \text{ such that } \mathbb{E}_{(\x,y, y', \delta) \sim P'}[\indct{h(\x+\delta)= y} - \indct{h(\x+\delta)= y'}] \geq \tau\\
        & \iff \max_{Q \in \Delta(\H)} \min_{(\x,y, y', \delta) \in S_{\text{aug}}} \Eover{h \sim Q}{\indct{h(\x+\delta)= y} - \indct{h(\x+\delta)= y'}} \geq \tau\\
        &\iff \max_{Q \in \Delta(\H)} \marginrob{Q,S} \geq \tau\\
        & \stackrel{(c)}{\iff} \max_{Q \in \Delta(\H)} \marginrob{Q,S} > 0,
    \end{align*}
    where the reverse implication in $(c)$ follows from our assumption that $\H, \gB(\epsilon)$ are compact sets~\citep{sion1958general}. In our proof for $(1) \implies (2)$ we showed that the last statement in the above display is equivalent to saying $\amclftwo{h}{Q}$ achieves  $100\%$ adversarial accuracy on $S$. This shows that $(2) \iff (3)$.
\end{itemize}

\end{proof}
\subsection{Proof of Theorem~\ref{thm:wl_condition}}
The proof of this Theorem follows directly from Lemma~\ref{lem:equiv_wl_condition}. In particular, the equivalence between statements $(2)$ and $(3)$ in the Lemma proves the Theorem.

\subsection{Proof of Theorem~\ref{thm:optimal_boosting_framework}}
The proof of this Theorem again follows from Lemma~\ref{lem:equiv_wl_condition}. 
Suppose there exists a boosting framework  which can guarantee $100\%$ accurate solution with a milder  condition on $\H$ than the condition in statement (3) of Lemma~\ref{lem:equiv_wl_condition}. Referring to this milder condition as statement 4, we have: statement $4$ $\implies$ statement $1$ in Lemma~\ref{lem:equiv_wl_condition}. Given the equivalence between statements 1 and 3 in Lemma~\ref{lem:equiv_wl_condition}, we can infer the following: statement 4 $\implies$ statement 3. This shows that any hypothesis class satisfying statement 4 should also satisfy  statement 3. Thus statement 4 can't be milder than the condition in statement 3. This shows that margin-boosting is optimal.

\subsection{Proof of Proposition~\ref{prop:wl_conditions_comparison}}
The first part of the proposition on proving \mbox{$\text{WL}_{\abernehty} \implies \text{WL}_{\robboostalg}$} follows from Theorem~\ref{thm:optimal_boosting_framework}. So, here we focus on proving \mbox{$\text{WL}_{\robboostalg} \notimplies \text{WL}_{\abernehty}$.} To prove this statement, it suffices to construct a base hypothesis class $\H$ which satisfies $\text{WL}_{\robboostalg}$ for some $\tau > 0$, but doesn't satisfy $\text{WL}_{\abernehty}$ for any $\tau > 0$. Here is how we construct such a $\H$. We consider the following simple setting:  $$\cX = \R, \cY = \{0,1\}, \gB(\epsilon) = \{\delta: |\delta|\leq \epsilon\}.$$ We let $\epsilon = 1$, and $S = \{(0, 1)\}$; that is, we only have $1$ sample in our training data with feature $x = 0$ and label $y = 1$. Our base hypothesis class is the set
$\H = \{h_{\theta}\}_{\theta \in [-1, 0.9]},$
where $h_{\theta}:\cX \to \cY$ is defined as 
\[
h_{\theta}(x) = \begin{cases} 0,\quad &\text{if } x\in [\theta, \theta+0.1] \\
1,\quad & \text{otherwise}
\end{cases}.
\]
In this setting, the weak learning condition $\text{WL}_{\abernehty}$ can be rewritten as follows: there exists a classifier $h_{\theta}\in \H$ which satisfies the following for some $\tau >0$
\begin{align*}
    &\indct{\forall \delta \in \gB(\epsilon): h_{\theta}(\delta)= 1}\geq \frac{1+\tau}{2}.
\end{align*}
Based on our construction of $\H$, it is easy to see that there is no $h\in \H$ which can satisfy the above condition for any $\tau > 0$. So, $\text{WL}_{\abernehty}$ is not satisfied in this setting. Now consider the weak learning condition $\text{WL}_{\robboostalg}$. This can be rewritten as follows: for any probability distribution $P'$ over points in the set $\gB(1)$, there exists a classifier $h_{\theta}\in \H$ which satisfies the following for some $\tau > 0$:
\begin{align*}
    &\mathbb{E}_{\delta \sim P'}[\indct{h_{\theta}(\delta)= 1}] \geq  \frac{1+\tau}{2}.
\end{align*}
We now show that for our choice of $\H$, the above weak learning condition holds for $\tau = 0.2$. To see this, divide $\gB(1)$ into the following (overlapping) intervals of width $0.1$: $[-1,-0.9], [-0.9,-0.8], \dots [0.8,0.9], [0.9,1]$. There are $20$ such intervals. It is easy to see that for any probability distribution $P'$ over $\gB(1)$, there exists atleast one interval in which the the probability distribution $P'$ has mass $<= \frac{1}{10}.$ By choosing a $h_{\theta}\in \H$ which assigns $0$ to points in that particular interval (and $1$ to all other points), we can show that the $\mathbb{E}_{\delta \sim P'}[\indct{h_{\theta}(\delta)= 1}] \geq  \frac{1.2}{2}.$ This shows that $\text{WL}_{\robboostalg}$ holds in this setting with $\tau = 0.2$. This shows that \mbox{$\text{WL}_{\robboostalg} \notimplies \text{WL}_{\abernehty}$.}

\section{Design of Algorithm~\ref{alg:mrrob}}
\label{sec:mrrob_design}
As previously mentioned, Algorithm~\ref{alg:mrrob_nn} has its roots in the framework of online learning~\citep{hazan2016introduction}. In this section, we present necessary background on online learning, game theory, and derive our algorithm for solving the max-min game in Equation~\eqref{eqn:robust_boosting_game}.
\subsection{Online Learning}
The online learning framework can be seen as a repeated game between a learner/decision-maker and an adversary. 
In this framework, in each round $t$, the learner makes a prediction \mbox{$\z_t \in \cZ$}, where $\cZ \subseteq\R^d$, and the adversary  chooses a loss function \mbox{$f_t:\cZ \rightarrow \mathbb{R}$} and observe each others actions. The goal of the learner is to choose a sequence of actions $\{\z_t\}_{t=1}^T$ so that the cumulative loss $\sum_{t=1}^T f_t(\z_t)$ is minimized. The benchmark with which the cumulative loss will be compared is called the best fixed policy in hindsight, which is given by $\inf_{\z \in \cZ}\sum_{t=1}^Tf_t(\z)$. This results in the following notion of \emph{regret}, which the learner aims to minimize $$\sum_{t = 1}^T f_t(\z_t) - \inf_{\z \in \cZ}\sum_{t=1}^Tf_t(\z).$$ 
Observe that there is a very simple strategy for online learning that guarantees $0$ regret, but under the provision that $f_t$ \emph{was known} to the learner ahead of round $t$. Then, an optimal strategy for the learner is to predict $\z_t$ as simply a minimizer of $f_t(\z).$ It is easy to see that this algorithm, known as Best Response (BR), has $0$ regret. While this is an impractical algorithm in the framework of online learning, it can be used to solve min-max games, as we will see in Section~\ref{sec:deriving_mrboost}.

A number of practical and efficient algorithms for regret minimization have been developed in the literature of online learning~\citep{hazan2016introduction, mcmahan2017survey, krichene2015hedge, kalai2005efficient}. In this work, we are primarily interested in the exponential weights update algorithm (Algorithm~\ref{alg:exp}).  In each round of this algorithm, the learner chooses its action $\z_t$ by uniformly sampling a point from the following distribution 
\[
P_t(\z) \propto \exp\left( -\eta\sum_{i = 1}^{t-1}f_i(\z)\right).
\]
\begin{algorithm}[tb]
\caption{Exponential Weights Algorithm (\textsc{EXP})}
\label{alg:exp}
\begin{algorithmic}[1]
  \small
  \STATE \textbf{Input:}   learning rate $\eta$
  \FOR{$t = 1 \dots T$}
  \STATE Sample $\z_t$ from the following distribution
  \[
  P_t(\z) \propto \exp\left( -\eta\sum_{i = 1}^{t-1}f_i(\z)\right).
  \]
  \STATE Play $\z_t$ and observe loss function $f_t$. Suffer loss $f_t(\z_t).$
  \ENDFOR
\end{algorithmic}
\end{algorithm}
We now present the following theorem which bounds the regret of this algorithm.
\begin{theorem}[Regret Bound]
\label{thm:exp_alg_regret_bound}
Suppose the action space $\cZ$ of the learner is a compact set. Moreover suppose the sequence of loss functions chosen by the adversary are measurable w.r.t Lebesgue measure, and uniformly bounded: $\sup_{t\in T, \z\in \cZ}|f_t(\z)| \leq B$. Suppose Algorithm~\ref{alg:exp} is run with $\eta \leq \frac{1}{2B\sqrt{T}}$. Then its expected regret can be bounded as follows
\[
\sup_{\z \in \cZ} \E\left[\sum_{t = 1}^T f_t(\z_t) - \sum_{t=1}^Tf_t(\z)\right]  \leq 2B\sqrt{T}(|\log\text{Vol}(\cZ)| + 1).
\]
\end{theorem}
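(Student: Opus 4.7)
The proof I would pursue is the standard potential-function analysis for exponential weights, adapted to the continuous action space. Define the unnormalized weights $W_t(\z) = \exp\bigl(-\eta \sum_{s=1}^{t-1} f_s(\z)\bigr)$ and the normalizer $Z_t = \int_{\cZ} W_t(\z)\,d\z$, so that the algorithm's sampling distribution is $P_t(\z) = W_t(\z)/Z_t$, and $Z_1 = \text{Vol}(\cZ)$. The strategy is to sandwich $\log Z_{T+1}$ from above by the algorithm's cumulative expected loss (plus a small quadratic term) and from below by the best comparator's cumulative loss (plus a small $\log$-volume term), and then rearrange.

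\textbf{Upper bound on $\log(Z_{T+1}/Z_1)$.} Per-round, I would write
\begin{align*}
\frac{Z_{t+1}}{Z_t} \;=\; \int_{\cZ} \frac{W_t(\z)}{Z_t} \exp(-\eta f_t(\z))\, d\z \;=\; \E_{\z \sim P_t}\bigl[\exp(-\eta f_t(\z))\bigr],
\end{align*}
and apply Hoeffding's lemma to the random variable $-\eta f_t(\z) \in [-\eta B, \eta B]$, yielding
\begin{align*}
\log(Z_{t+1}/Z_t) \;\leq\; -\eta\, \E_{\z \sim P_t}[f_t(\z)] + \tfrac{1}{2}\eta^2 B^2.
\end{align*}
Telescoping from $t=1$ to $T$ gives $\log Z_{T+1} - \log\text{Vol}(\cZ) \leq -\eta \E\bigl[\sum_t f_t(\z_t)\bigr] + \tfrac{1}{2}\eta^2 B^2 T$.

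\textbf{Lower bound on $\log Z_{T+1}$.} For a fixed comparator $\z^* \in \cZ$, I would localize the integral by intersecting $\cZ$ with a small Euclidean ball $B_r(\z^*)$, writing
\begin{align*}
Z_{T+1} \;\geq\; \int_{B_r(\z^*) \cap \cZ} \exp(-\eta L_T(\z))\, d\z,
\end{align*}
where $L_T(\z)=\sum_s f_s(\z)$. Since $|L_T(\z) - L_T(\z^*)| \leq 2TB$ uniformly (from $|f_s|\leq B$), this already yields a crude lower bound; to obtain a tighter estimate of the form $\log Z_{T+1} \geq -\eta L_T(\z^*) + \log\text{Vol}(B_r(\z^*)\cap \cZ) + o(1)$, I would combine this localization with the measurability hypothesis and a discretization (covering) argument that pushes $r \to 0$, controlling the error through the uniform bound $B$. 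Equivalently, one can view this as an FTRL-with-entropy analysis in which the comparator is a uniform distribution over a shrinking ball around $\z^*$, so that the relative-entropy penalty contributes $\log(\text{Vol}(\cZ)/\text{Vol}(B_r(\z^*))) = |\log\text{Vol}(\cZ)| + O(1)$.

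\textbf{Combining and tuning.} Taking the difference of the two bounds and taking $\sup_{\z^* \in \cZ}$ yields, after rearranging,
\begin{align*}
\sup_{\z \in \cZ} \E\Bigl[\sum_{t=1}^T f_t(\z_t) - \sum_{t=1}^T f_t(\z)\Bigr] \;\leq\; \frac{|\log\text{Vol}(\cZ)| + 1}{\eta} + \frac{1}{2}\eta B^2 T.
\end{align*}
Plugging in the prescribed $\eta \leq \frac{1}{2B\sqrt{T}}$ (the near-optimal choice for balancing the two terms) delivers the stated $2B\sqrt{T}\bigl(|\log\text{Vol}(\cZ)| + 1\bigr)$ bound.

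\textbf{Main obstacle.} The routine pieces are the Hoeffding step and the tuning of $\eta$; the delicate point is the lower bound on $Z_{T+1}$ in the continuous setting, because unlike the discrete case one cannot simply write $Z_{T+1} \geq \exp(-\eta L_T(\z^*))$ for a single point. I expect the bulk of the careful work to go into the ball-localization/covering argument, which is where measurability of the $f_t$ and compactness of $\cZ$ are used, and where the $|\log \text{Vol}(\cZ)|$ factor in the final bound arises.
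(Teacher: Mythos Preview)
Your potential-function/Hoeffding route is \emph{not} the one the paper takes. The paper instead invokes the dual view of continuous exponential weights as Follow-The-Perturbed-Leader with a Gumbel process perturbation $\sigma(\cdot)$, i.e.\ $\z_t=\argmin_{\z}\sum_{i<t}f_i(\z)-\sigma(\z)/\eta$, and then applies a generic FTPL regret decomposition against an arbitrary comparator distribution $P$: a stability term $\sum_t(\E_{P_t}[f_t]-\E_{P_{t+1}}[f_t])$ plus a perturbation term $\eta^{-1}(\E_{\sigma,\z\sim P_1}[\sigma(\z)]-\E_{\sigma,\z\sim P}[\sigma(\z)])$. The stability term is controlled by a total-variation bound $\text{TV}(P_t,P_{t+1})\le\tfrac12(e^{2\eta B}-1)$ between successive Gibbs measures (their Lemma~\ref{lem:exp_stability}), and the perturbation term by $\E[\|\sigma\|_\infty]\le|\log\text{Vol}(\cZ)|$ from the Gumbel-max machinery. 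So where you get $\tfrac12\eta B^2 T$ from Hoeffding, they get $\tfrac{BT}{2}(e^{2\eta B}-1)\le 2\eta B^2 T$ from TV stability; and where you need a lower bound on $Z_{T+1}$, they get the log-volume term directly as the expected supremum of the Gumbel process, with no localization at all.

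That last difference matters, because your lower-bound step has a real gap under the stated hypotheses. The theorem assumes only measurability and uniform boundedness of the $f_t$, not continuity. Your ball-localization argument needs $\inf_{\z\in B_r(\z^*)}L_T(\z)\to L_T(\z^*)$ as $r\to 0$, and measurability alone does not give this; $L_T$ can oscillate wildly near $\z^*$, so shrinking $r$ buys you nothing beyond the trivial $|L_T(\z)-L_T(\z^*)|\le 2TB$. Equivalently, in your FTRL-with-entropy phrasing, taking $Q$ uniform on $B_r(\z^*)$ controls $KL(Q\|P_1)$ but gives no control on $\E_Q[L_T]-L_T(\z^*)$ without a modulus of continuity. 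The paper's FTPL/Gumbel route sidesteps this entirely: the comparator is a distribution from the start, and the single-point comparison falls out because the bound on the perturbation term is uniform over all $P$ (via $\text{TV}\le 1$), so no pointwise approximation is ever needed.
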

\subsubsection{Intermediate Results for proof of Theorem~\ref{thm:exp_alg_regret_bound}}
Before we present the proof of Theorem~\ref{thm:exp_alg_regret_bound}, we present some intermediate results that help us prove the theorem. The following Lemma bounds the total variation distance between two probability distributions.
\begin{lemma}
\label{lem:exp_stability}
Let $\cZ\subseteq \R^d$ be a compact set. Let $f:\cZ\to \R$, $g:\cZ\to \R$ be  two  functions measurable w.r.t Lebesgue measure and are uniformly bounded: $\sup_{\z\in\cZ}|f(\z)|\leq B_f$, $\sup_{\z\in\cZ}|g(\z)|\leq B_g$. Let $P_1, P_2$ be two probability distributions over $\cZ$ that are defined as follows
\[
P_1(\z) \propto \exp\left(f(\x)\right), \quad P_2(\z) \propto \exp\left(f(\x) + g(\x)\right).
\]
Then the total variation distance between $P_1, P_2$ can be bounded as follows
$$\text{TV}(P_1, P_2) \leq \frac{\exp(2B_g)-1}{2}.$$
\end{lemma}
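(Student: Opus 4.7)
The plan is to directly manipulate the Radon--Nikodym derivative $P_2/P_1$ and use the boundedness of $g$ to control it pointwise, then integrate.

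First I would write out the normalizing constants explicitly: $Z_1 \defeq \int_{\cZ} \exp(f(\z))\,\dd\z$ and $Z_2 \defeq \int_{\cZ} \exp(f(\z)+g(\z))\,\dd\z$, both of which are finite since $\cZ$ is compact and $f$ is bounded. Then $P_1(\z) = \exp(f(\z))/Z_1$ and $P_2(\z) = \exp(f(\z)+g(\z))/Z_2$, giving the clean identity
\begin{equation*}
\frac{P_2(\z)}{P_1(\z)} \;=\; \exp(g(\z))\cdot \frac{Z_1}{Z_2}.
\end{equation*}

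The second step is to bound each factor. Since $|g(\z)| \leq B_g$ pointwise, we immediately get $\exp(g(\z)) \in [\exp(-B_g),\exp(B_g)]$. The same bound applied inside the integral defining $Z_2$ yields $Z_2/Z_1 \in [\exp(-B_g),\exp(B_g)]$, so $Z_1/Z_2 \in [\exp(-B_g),\exp(B_g)]$ as well. Multiplying the two ranges, the ratio $r(\z) \defeq P_2(\z)/P_1(\z)$ lies in $[\exp(-2B_g),\exp(2B_g)]$. Consequently
\begin{equation*}
|r(\z) - 1| \;\leq\; \max\!\bigl(\exp(2B_g)-1,\;1-\exp(-2B_g)\bigr) \;=\; \exp(2B_g)-1,
\end{equation*}
where the maximum is attained by the upper endpoint (this is the one mildly fiddly check: $1-e^{-x} \leq e^x - 1$ for $x\geq 0$, which follows from multiplying through by $e^x \geq 1$).

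Finally, I would assemble the TV bound. Writing $|P_2(\z) - P_1(\z)| = P_1(\z)\,|r(\z)-1|$ and integrating,
\begin{equation*}
\mathrm{TV}(P_1,P_2) \;=\; \tfrac12\!\int_{\cZ}|P_1(\z)-P_2(\z)|\,\dd\z \;\leq\; \tfrac12\bigl(\exp(2B_g)-1\bigr)\!\int_{\cZ}P_1(\z)\,\dd\z \;=\; \tfrac12\bigl(\exp(2B_g)-1\bigr),
\end{equation*}
which is the desired bound. There is no real obstacle here; the only point that requires a moment of care is observing that the partition-function ratio $Z_1/Z_2$ and the pointwise factor $\exp(g(\z))$ compound multiplicatively to produce the factor $2B_g$ (rather than $B_g$) in the exponent. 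Note the bound $B_f$ on $f$ is not actually used in the argument beyond ensuring $Z_1,Z_2$ are finite, which is already guaranteed by compactness of $\cZ$ and measurability of $f$.
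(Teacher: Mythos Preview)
Your proof is correct and is essentially the same argument as the paper's: both factor $|P_1-P_2|$ as $P_1\cdot|1-(Z_1/Z_2)\exp(g(\z))|$, bound the partition-function ratio $Z_1/Z_2\in[e^{-B_g},e^{B_g}]$ via the bound on $g$, take the pointwise maximum $\max(e^{2B_g}-1,1-e^{-2B_g})=e^{2B_g}-1$, and integrate against $P_1$. Your additional remarks (the check that $1-e^{-x}\le e^x-1$ and the observation that $B_f$ is not really needed) are correct refinements the paper leaves implicit.
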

\begin{proof}
From the definition of total variation distance we have
\[
\text{TV}(P_1, P_2) = \frac{1}{2}\int_{\cZ}\left|\frac{\exp(f(\z))}{N_1} - \frac{\exp(f(\z) + g(\z))}{N_2}\right| d\z,
\]
where $N_1, N_2$ are normalization constants which are defined as 
$$N_1 = \int_{\cZ}\exp(f(\z))d\z, \quad N_2 = \int_{\cZ}\exp(f(\z)+g(\z))d\z.$$
First note that $\frac{Z_2}{Z_1}$ can be bounded as follows
\begin{align*}
    \frac{N_2}{N_1} &= \frac{1}{2}\int_{\cZ} \exp(g(\z)) \times \frac{\exp(f(\z))}{N_1}d\z = \Eover{\z\sim P_1}{\exp(g(\z))}.
\end{align*}
Since $\sup_{\z\in\cZ}|g(\z)|\leq B_g$, we have
\[
e^{-B_g}\leq \frac{N_2}{N_1} \leq e^{B_g}.
\]
We use this to bound the TV distance
\begin{align*}
    \text{TV}(P_1, P_2) & = \frac{1}{2}\int_{\cZ}\left|\frac{\exp(f(\z))}{N_1} - \frac{\exp(f(\z) + g(\z))}{N_2}\right| d\z\\
    & = \frac{1}{2}\int_{\cZ}\frac{\exp(f(\z))}{N_1}\left|1 - \frac{N_1}{N_2}\exp(g(\z)\right| d\z\\
    & \stackrel{(a)}{\leq} \frac{1}{2}\Eover{\z\sim P_1}{\max\left\lbrace\exp(2B_g)-1, 1-\exp(-2B_g)\right\rbrace}\\
    & \leq \frac{\exp(2B_g)-1}{2},
\end{align*}
where $(a)$ follows from our bound on $N_2/N_1$.
\end{proof}
\subsubsection{Proof of Theorem~\ref{thm:exp_alg_regret_bound}}
The proof of the Theorem relies on the following dual view of exponential weights update algorithm~\citep{maddison2014sampling, malmberg2012argmax}. In this view, the problem of sampling from distribution $P_t$ can be written as the following optimization problem
\begin{align*}
    \z_t = \argmin_{\z \in \cZ} \sum_{i=1}^{t-1}f_i(\z) - \frac{\sigma(\z)}{\eta},
\end{align*}
where $\sigma(\z)$ is a Gumbel process over $\cZ$~\citep{maddison2014sampling}. This dual view shows that Algorithm~\ref{alg:exp} is nothing but a Follow-The-Perturbed-Leader (FTPL) algorithm with Gumbel process as the perturbation. To bound its regret, we rely on the following regret bound of FTPL which holds for any distribution $P$ over $\cZ$~\citep[see Lemma 4 of][]{suggala2020online}
\begin{align}
\label{eqn:ftpl_exp_regret_bound}
    \sum_{t=1}^T\Eover{\z\sim P_t}{f_t(\z)} - \sum_{t=1}^T\Eover{\z\sim P}{f_t(\z)} \leq \sum_{t=1}^T \underbrace{\left(\Eover{\z\sim P_t}{f_t(\z)} - \Eover{\z\sim P_{t+1}}{f_t(\z)}\right)}_{T_1} + \underbrace{\frac{1}{\eta}\left(\Eover{\sigma, \z\sim P_1}{\sigma(\z)} -\Eover{\sigma, \z\sim P}{\sigma(\z)} \right)}_{T_2}
\end{align}
 $T_1$ can be bounded as follows
\begin{align*}
    \Eover{\z\sim P_t}{f_t(\z)} - \Eover{\z\sim P_{t+1}}{f_t(\z)} &\stackrel{(a)}{\leq} \text{TV}(P_t,P_{t+1})\|f_t\|_{\infty}\\
    &\stackrel{(b)}{\leq} B\times \text{TV}(P_t,P_{t+1})\\
    &\stackrel{(c)}{\leq} \frac{B}{2}\left(\exp(2\eta B)-1\right),
\end{align*}
where $\|f_t\|_{\infty}$ in $(a)$ is defined as $\sup_{\z\in\cZ}|f_t(\z)|$. Inequality $(b)$ in the above equation follows from our assumption that $f_t$ is bounded by $B$, and $(c)$ follows from Lemma~\ref{lem:exp_stability}.
$T_2$ can be bounded as follows
\begin{align*}
    \Eover{\sigma,\z\sim P_1}{\sigma(\z)} -\Eover{\sigma,\z\sim P}{\sigma(\z)}  \leq \Eover{\sigma}{\text{TV}(P_t,P_{t+1})\|\sigma\|_{\infty}}\leq \Eover{\sigma}{\|\sigma\|_{\infty}},
\end{align*}
where the last inequality follows from the fact that total variation distance between any two distributions is upper bounded by $1$.
Following \citet{maddison2014sampling}, the last term can be upper bounded by $|\log{\text{Vol}(\cZ)}|$. 
Plugging in the bounds of $T_1$ and $T_2$ into Equation~\eqref{eqn:ftpl_exp_regret_bound} gives us 
\begin{align*}
    \sum_{t=1}^T\Eover{\z\sim P_t}{f_t(\z)} - \sum_{t=1}^T\Eover{\z\sim P}{f_t(\z)} \leq \frac{|\log{\text{Vol}(\cZ)}|}{\eta} + \frac{BT}{2}\left(\exp(2\eta B)-1\right).
\end{align*}
For our choice of $\eta$, we can upper bound $\exp(2\eta B)-1$ by $4\eta B$. Plugging this into the above equation gives us the required bound on regret.
\subsection{Game Theory}
\label{sec:games}
Consider the following two-player zero-sum game
\[
\min_{\x\in \cX}\max_{\y \in \cY} f(\x,\y) 
\]
A pair $(\x^*,\y^*) \in \cX \times \cY$ is called a pure strategy Nash Equilibrium (NE) of the game, if the following holds
\[
\max_{\y \in \cY} f(\x^*, \y) \leq {f(\x^*,\y^*)} \leq \min_{\x \in \cX} {f(\x, \y^*)}.
\]
Intuitively, this says that there is no incentive for any player to change their strategy while the other player keeps hers unchanged. 
A pure strategy NE need not always exist though. What exists often is a mixed strategy NE~\citep{sion1958general}, which is defined as follows.
Let $P^*, Q^*$ be probability distributions over $\cX, \cY$. The pair $(P^*,Q^*)$ is called a mixed strategy NE of the above game if
\begin{align*}
    \sup_{\y \in \cY} \Eover{\x\sim P^*}{f(\x, \y)} \leq \Eover{\x\sim P^*}{\Eover{\y\sim Q^*}{f(\x,\y)}} \leq \inf_{\x \in \cX} \Eover{\y\sim Q^*}{f(\x, \y)}.
\end{align*}
Note that $(P^*,Q^*)$ can also be viewed as a pure strategy NE of the following linearized game\footnote{A linearized game is nothing but a game in the space of probability measures.}
\[
\inf_{P\in \Delta(\cX)}\sup_{Q \in \Delta(\cY)} \Eover{\x\sim P}{\Eover{\y\sim Q}{f(\x,\y)}}.
\]
Finally, $(P^*,Q^*)$ is called an $\epsilon$-approximate mixed NE of the game if
\begin{align*}
    \sup_{\y \in \cY} \Eover{\x\sim P^*}{f(\x, \y)} -\epsilon\leq \Eover{\x\sim P^*}{\Eover{\y\sim Q^*}{f(\x,\y)}} \leq \inf_{\x \in \cX} \Eover{\y\sim Q^*}{f(\x, \y)} + \epsilon.
\end{align*}
\subsection{Deriving $\robboostalg$}
\label{sec:deriving_mrboost}
In this work, we are interested in computing mixed strategy NE of the game in Equation~\eqref{eqn:robust_boosting_game}, which we present here for convenience
\begin{align*}
\max_{Q \in \Delta(\H)}\min_{(\x,y,y',\delta) \in S_{\text{aug}}} [h_Q(\x)]_y - [h_Q(\x)]_{y'}.
\end{align*}
This game can equivalently be written as follows
\begin{align*}
\max_{Q \in \Delta(\H)}\min_{P \in \Delta(S_{\text{aug}})} [h_Q(\x)]_y - [h_Q(\x)]_{y'}.
\end{align*}
This follows since the optimum of a linear program over a convex hull can always be attained at an extreme point. Using our definition of $\pairwiseloss{\cdot}$, we can rewrite the above game as
\begin{align}
\label{eqn:robust_boosting_game_linear}
\min_{Q \in \Delta(\H)}\max_{P \in \Delta(S_{\text{aug}})} \Eover{h\sim Q}{\Eover{(\x,y,y',\delta) \in P}{\pairwiseloss{h(\x+\delta), y, y'}}}.
\end{align}
A popular and widely used approach for finding mixed NE of this game is to rely on online learning algorithms~\citep{hazan2016introduction, cesa2006prediction}. In this approach, the minimization player and the maximization player play a repeated game against each other. Both the players rely on online learning algorithms to choose their actions in each round of the game, with the objective of minimizing their respective regret. In our work, we let the min player rely on Best Response (BR) and the max player rely on exponential weight update algorithm. 

We are now ready to describe our algorithm for computing a mixed strategy NE of Equation~\eqref{eqn:robust_boosting_game} (equivalently the pure strategy NE of the linearized game in Equation~\eqref{eqn:robust_boosting_game_linear}). Let $(h_t, P_t)$ be the iterates generated by the algorithm in $t$-th iteration. The maximization player chooses distribution $P_t$ over $S_{\text{aug}}$ using exponential weights update
\[
P_{t}(\x,y,y',\delta) \propto\exp\left(\eta\sum_{j=1}^{t-1}\pairwiseloss{h_j(\x+\delta). y,y'}\right).
\]
The minimization player chooses $h_t$ using BR, which involves computing a minimizer of the following objective
\begin{align*}
      h_t = \argmin_{h \in \H} \mathbb{E}_{(\x,y, y', \delta) \sim P_{t}}[\pairwiseloss{h(\x+\delta), y,y'}].
\end{align*}
In Section~\ref{sec:mrboost_convergence_rates_proofs}, we show that this algorithm converges to a mixed strategy NE of Equation~\eqref{eqn:robust_boosting_game}.

\section{Proofs of Section~\ref{sec:mrboost}}
\label{sec:mrboost_convergence_rates_proofs}
\subsection{Proof of Theorem~\ref{thm:convergence_rate_robboost}}
The proof of this Theorem relies on the observation that the maximization player is using exponential weights update algorithm to generate its actions and the minimization player is using Best Response (BR) to generate its actions. To simplify the notation, we let $L(Q,P)$ denote
\[
L(Q,P) = \Eover{h \sim Q}{\mathbb{E}_{(\x,y, y', \delta) \sim P}[\pairwiseloss{h(\x+\delta), y,y'}]}.
\]
Note that $L(Q,P)$ is linear in both its arguments. With a slight overload of notation, we let $L(h,P), L(h, (\x,y,y',\delta))$ denote
\[
L(h,P) = \mathbb{E}_{(\x,y, y', \delta) \sim P}[\pairwiseloss{h(\x+\delta), y,y'}].
\]
\[
L(h,(\x,y,y',\delta)) = \pairwiseloss{h(\x+\delta), y,y'}.
\]
\paragraph{Regret of min player.} In the $t$-th iteration, the min player observes the loss function $L(\cdot, P_t)$ and chooses its action using BR
\[
h_t \in \argmin_{h \in \H} L(h, P_t). 
\]
Since $L(h_t,P_t)\leq \min_{h\in \H} L(h, P_t)$, we have
\begin{align}
\label{eqn:min_player_regret}
\sum_{t=1}^T L(h_t,P_t) - \min_{h \in \H}\sum_{t=1}^TL(h, P_t) \leq 0.
\end{align}
\paragraph{Regret of max player.} In the $t$-th iteration, the min player chooses its action using exponential weights update algorithm and observes the loss function $L(h_t, \cdot)$. In particular, it plays action $P_t$, which is defined as follows
\[
P_{t}(\x,y,y',\delta) \propto\exp\left(\eta\sum_{j=1}^{t-1}L(h_t, (\x,y,y',\delta))\right). 
\]
Relying on the regret bound for exponential weights update algorithm derived in Theorem~\ref{thm:exp_alg_regret_bound}, and using the fact that $|L(h, (\x,y,y',\delta))|$ is bounded by $1$,  we get
\begin{align}
\label{eqn:max_player_regret}
    \max_{P \in \Delta(S_{\text{aug}})}\sum_{t=1}^TL(h_t, P)  - \sum_{t=1}^T L(h_t,P_t)  \leq 3\sqrt{T}\left(|\log{\text{Vol}(S_{\text{aug}})}|+1\right),
\end{align}
where $\text{Vol}(S_{\text{aug}}) = nK \text{Vol}(\gB(\epsilon))$.
Combining the regret bounds of the min and max players in Equations~\eqref{eqn:min_player_regret},~\eqref{eqn:max_player_regret}, we get
\begin{align}
\label{eqn:min_max_regret_bound}
    \max_{P \in \Delta(S_{\text{aug}})}\frac1T\sum_{t=1}^TL(h_t, P) - \min_{h \in \H}\frac1T\sum_{t=1}^TL(h, P_t) \leq 3\left(|\log{\text{Vol}(S_{\text{aug}})}|+1\right)T^{-1/2}.
\end{align}
Let $\xi(T) = 3\left(|\log{\text{Vol}(S_{\text{aug}})}|+1\right)T^{-1/2}$. We have the following from the above equation
\[
\max_{P \in \Delta(S_{\text{aug}})}L(Q(T),P) \leq \min_{Q\in \Delta(H)}\max_{P\in \Delta(S_{\text{aug}})} L(Q,P) + \xi(T).
\]
Rearranging the terms in the above equation and relying our definitions of $L(Q,P), \marginrob{\cdot}$, we get the following 
\begin{align*}
    \max_{Q\in \Delta(\H)} \marginrob{Q, S} \leq \marginrob{Q(T), S} + \xi(T).
\end{align*}
This proves the first part of the Theorem. The second part of the Theorem follows directly from the regret bound of the max player in Equation~\eqref{eqn:max_player_regret}.

\begin{algorithm}[t]
\caption{$\samplerrnd$}
\label{alg:sampler_rnd}
\begin{algorithmic}[1]
  \small
  \STATE \textbf{Input:} training data $S$, base models $\{\theta_j\}_{j=1}^t$.
  \STATE  $\widehat{S}_B \leftarrow \{\}$
  \STATE Uniformly sample a batch of points $\{(\x_b, y_b)\}_{b=1}^B$ from $S$.
  \FOR{$b = 1 \dots B$}
  \STATE uniformly sample $y'_b$ from $\cY\setminus\{y_b\}$, and compute $\delta_b$ as
  \begin{align*}
      &\delta_b \in \argmax_{\delta \in \gB(\epsilon)} \cemargin\left(\sum_{j=1}^{t}g_{\theta_j}(\x_b+\delta), y_b, y'_b\right)
  \end{align*}
  \STATE $\widehat{S}_B \leftarrow \widehat{S}_B \cup \{(\x_b, y_b, y'_b, \delta_b)\}.$ 
  \ENDFOR
  \STATE \textbf{Output:}  $\widehat{S}_B$
\end{algorithmic}
\end{algorithm}

\begin{algorithm}[t]
\caption{$\samplermax$}
\label{alg:sampler_max}
\begin{algorithmic}[1]
  \small
  \STATE \textbf{Input:} training data $S$, base models $\{\theta_j\}_{j=1}^t$.
  \STATE  $\widehat{S}_B \leftarrow \{\}$
  \STATE Uniformly sample a batch of points $\{(\x_b, y_b)\}_{b=1}^B$ from $S$.
  \FOR{$b = 1 \dots B$}
  \STATE  compute $(y'_b,\delta_b)$ as
  \begin{align*}
  &(y'_b,\delta_b) \in \argmax_{\delta \in \gB(\epsilon), y'\in \cY\setminus\{y_b\}} \cemargin\left(\sum_{j=1}^{t}g_{\theta_j}(\x_b+\delta), y_b, y'\right)
  \end{align*}
  \STATE $\widehat{S}_B \leftarrow \widehat{S}_B \cup \{(\x_b, y_b, y'_b, \delta_b)\}.$
  \ENDFOR
  \STATE \textbf{Output:}  $\widehat{S}_B$
\end{algorithmic}
\end{algorithm}

\section{$\robboostalgnn$}
\label{sec:mrboost_nn_details}

\subsection{Computationally Efficient Samplers}
\label{sec:samplers_efficient}
As previously mentioned, the sampling sub-routine in Algorithm~\ref{alg:sampler_exp} is the key bottleneck in our  Algorithm~\ref{alg:mrrob_nn}. So we design computationally efficient samplers which replace ``soft weight'' assignments in Algorithm~\ref{alg:sampler_exp} with ``hard weight'' assignments. Roughly speaking, these samplers first randomly sample a point $(\x,y,y')\in \Tilde{S}_{\text{aug}}$, and find the worst perturbations for it and use it to train the base classifier.\footnote{recall $\Tilde{S}_{\text{aug}} = \{(\x,y, y'): (\x,y)\in S, y' \in \cY\setminus\{y\}\}$} In Algorithm~\ref{alg:sampler_all} we described one of these samplers. Algorithms~\ref{alg:sampler_rnd},~\ref{alg:sampler_max} below present two other samplers. All these samplers differ in the way they perform hard weight assignment. 

\paragraph{$\samplerrnd$.} This is the most intuitive sampler. Here, we first uniformly sample $(\x,y)$ from $S$, and then  randomly pick a false label $y'\in \cY \setminus \{y\}$. Next, we generate a perturbation with the highest pairwise margin loss for the ensemble $\{\theta_j\}_{j=1}^t$. This involves solving the following optimization problem
\begin{align*}
\delta^* \in \argmax_{\delta \in \gB(\epsilon)} \cemargin\left(\sum_{j=1}^{t}g_{\theta_j}(\x+\delta), y, y'\right)
\end{align*}
\paragraph{$\samplermax$.} This sampler differs from $\samplerrnd$ in the way it chooses the label $y'$. Instead of randomly choosing $y'$ from $\cY \setminus \{y\}$, it picks the worst possible $y'$. That is, it picks a $y'$ which leads to the highest pairwise margin loss. This involves solving the following optimization problem
\begin{align*}
  (y'^*,\delta^*) \in \argmax_{\delta \in \gB(\epsilon), y'\in \cY\setminus\{y_b\}} \cemargin\left(\sum_{j=1}^{t}g_{\theta_j}(\x+\delta), y, y'\right)
  \end{align*}
\paragraph{$\samplerall$.}  This sampler smooths the ``max'' operator in $\samplermax$ by replacing it with the average of pairwise margin losses of all possible false labels. 
This involves solving the following optimization problem
\begin{align*}
  \delta^* \in \argmax_{\delta \in \gB(\epsilon)}\sum_{y'\in \cY\setminus\{y\}} \cemargin\left(\sum_{j=1}^{t}g_{\theta_j}(\x+\delta), y, y'\right)
 \end{align*}
  
We found that the last sampler ($\samplerall$) works best in our experiments (see related content in Section~\ref{sec:more_results}.). So we use this it by default, unless mentioned otherwise. 

\section{More Details about Experiments}
\label{sec:more_details}

\subsection{Efficient Implementation of MCE loss}
\label{sec:efficiency}
As mentioned in the main text, for training a single network, our technique has as little as $5\%$ additional computational overhead over baselines. 
For training an ensemble of $T$ networks, the runtime scales linearly with $T$. 
The following table shows the running time (in seconds) of one epoch of adversarial training under different settings.
These experiments were run on an NVIDIA A100 GPU.
\begin{center}
\begin{small}
\begin{sc}
\begin{tabular}{l|cc}
\toprule
Setting & \begin{tabular}{@{}c@{}}AT+CE \\ (baseline)\end{tabular}   & \begin{tabular}{@{}c@{}}AT+MCE \\ (ours)\end{tabular} \\
\midrule
\midrule
SVHN+Res18 & $99$s & $101$s \\
CIFAR10+Res18 & $71$s & $72$s \\
CIFAR100+Res18 & $70$s & $72$s \\
\bottomrule
\end{tabular}
\end{sc}
\end{small}
\end{center}

\subsection{Adversarial Attacks}

Given an input image $\x$, the target of an adversarial attack is to find an adversarial example $\x'$ within a local region of $\x$, such that it can mislead the classifier $g(\cdot)$ to make wrong decision making results.
The local region is usually defined to be an $\epsilon-$norm ball centred at $\x$.
In this work, we focus on $\ell_\infty$ cases, which means $\Vert\x'-\x\Vert_\infty\le\epsilon$.

One of the earliest attacks is Fast Gradient Sign Method (FGSM) \citep{goodfellow2014explaining}, which is generated in the following way:
$$
\x' = \x + \epsilon\cdot\textrm{sign}(\nabla_{\x}\ell(g(\x), y)),
$$
which is essentially one-step gradient ascent in the input space.
Another commonly used attack is Projected Gradient Descent (PGD) \citep{madry2017towards}, which perturbs the data for $K$ steps:
$$
\x'^{k+1} =  \Pi_\epsilon\left(\x^k +\alpha\cdot\textrm{sign}(\nabla_{\x^k}\ell(g(\x^k), y))\right),
$$
where $\alpha$ is the attack step size, and $\Pi_\epsilon$ denotes a projection to the $\ell_\infty$ norm ball.
The Auto Attack \citep{Croce2020ReliableEO} is a strong and reliable evaluation suite,
including a collection of three white-box attacks (APGD-CE \citep{Croce2020ReliableEO}, APGD-DLR \citep{Croce2020ReliableEO} and FAB \citep{Croce2020MinimallyDA}) and one black-box Square Attack \citep{Andriushchenko2020SquareAA}.

\subsection{Missing Details}
\label{sec:mce_variants}

In this section, we present more details about our experiments that are missing in the main paper.
We first explain how we combine the proposed MCE loss into existing defense algorithms. To simplify the notation, we define $\cemargina$ as follows
\[
\cemargina(g(\x), y) \coloneqq \left(\frac{1}{K-1}\sum_{y'\in \cY \setminus\{y\}}\cemargin(g(\rvx), y, y')\right).
\]
\textbf{TRADES.} 
TRADES \citep{zhang2019theoretically} proposes to use the following training objective:
$\ce(g(\rvx), y)+\lambda \cdot\max_{\Vert\x'-\x\Vert\le\epsilon} \left(  \KL{\left(\rvp(\rvx) \| \rvp\left({\rvx}^{\prime}\right)\right)}\right)$, where $\rvp=\sm(g)$.
We propose to incorporate MCE into this training framework as follows:
$$\cemargina(g(\x), y)+\lambda \cdot\max_{\Vert\x'-\x\Vert\le\epsilon} \left(  \KL{\left(\rvp(\rvx) \| \rvp\left({\rvx}^{\prime}\right)\right)}+
\KL{\left(\tilde\rvp(\rvx) \| \tilde\rvp\left({\rvx}^{\prime}\right)\right)}
\right),$$ where $\tilde\rvp=\sm(-g)$.
In our experiments, we set $\lambda=6$.

\textbf{MART.}
MART \citep{wang2019improving} uses the following objective:
${\bce}\left(g\left({\rvx}^{\prime}\right), y\right)+\lambda \cdot \KL{\left(\rvp(\rvx) \| \rvp\left({\rvx}^{\prime}\right)\right)} \cdot\left(1-\rvp(\rvx)_{y}\right)$ 
where $\x'$ is an adversarial example from attack standard cross entropy loss. We refer readers to the original paper \citep{wang2019improving} for more details on the BCE loss.
We propose to incorporate MCE into this training framework as follows:
$$\cemargina(g(\x), y)
+\lambda \cdot \left( \KL{\left(\rvp(\rvx) \| \rvp\left({\rvx}^{\prime}\right)\right)} \cdot\left(1-\rvp(\rvx)_{y}\right) 
+ \KL{\left(\tilde\rvp(\rvx) \| \tilde\rvp\left({\rvx}^{\prime}\right)\right)\cdot\left(1-\tilde\rvp(\rvx)_{y}\right) }
\right).$$
The notation of $\tilde\rvp$ is defined above.
In our experiments, we set $\lambda=1$.

\textbf{AWP.} 
AWP \citep{Wu2020AdversarialWP} proposes to optimize $\min_{\vtheta}\max_{\Vert\tilde\vtheta-\vtheta\Vert\le \gamma}\E_{\x, y}\left[ \max_{\Vert\x'-\x\Vert\le\epsilon} \ce(g_{\tilde\vtheta}(\x), y)\right]$, where $\gamma$ is an additional hyperparameter to constrain the perturbation on model weights $\tilde\vtheta$.
As before, we simply replace cross entropy loss in this objective with $\cemargina$.

\textbf{GAIR.}
GAIR \citep{Zhang2020GEOMETRYAWAREIA} develops a reweighting method with cross entropy loss to improve adversarial training, which is orthogonal to our contribution.
We simply replace the cross entropy loss in both training and attacks with $\cemargina$ to come up with GAIR+MCE method.
For the hyperparameters, we use the default settings as the original paper suggests.
We use the sigmoid-type decreasing function, set $\lambda=0$, set the $\lambda$ schedule to be the ``fixed" schedule, and set the GAIR beginning epoch to be the time of first learning rate decay.


\textbf{Training and Attack Details.} In boosting experiments, the number of parameters of the five ResNet-18 ensemble is $55869810$, while the deeper ResNet-158 model has $58156618$ learnable parameters.
We use 100 epochs for each boosting iteration (same as we do in Section~\ref{sec:single_model_experiment}) and run for five iterations.
The output logit of the ensemble model is defined as the average of logits from different base classifiers.
We test the robustness of all boosting methods with PGD-20 attack.

\makeatletter 
  \newcommand\figcaption{\def\@captype{figure}\caption} 
  \newcommand\tabcaption{\def\@captype{table}\caption} 
\makeatother
\begin{figure}[t] 
\small
\centering
  \begin{minipage}[b]{0.6\textwidth} 
  \begin{minipage}[b]{0.5\textwidth} 
    \centering 
    \includegraphics[width=0.98\textwidth]{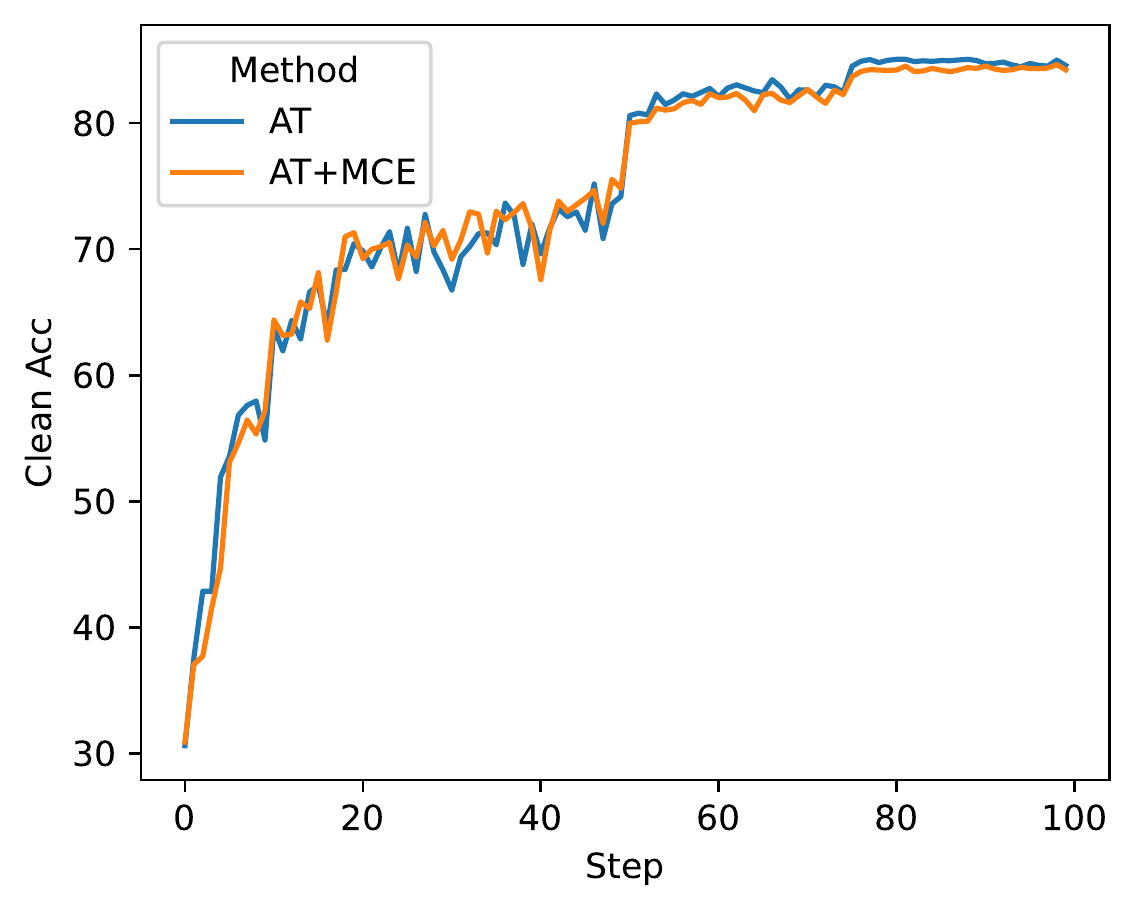}
  \end{minipage}%
  \hspace{-0.2cm}  
  \begin{minipage}[b]{0.5\textwidth} 
    \centering 
    \includegraphics[width=0.98\textwidth]{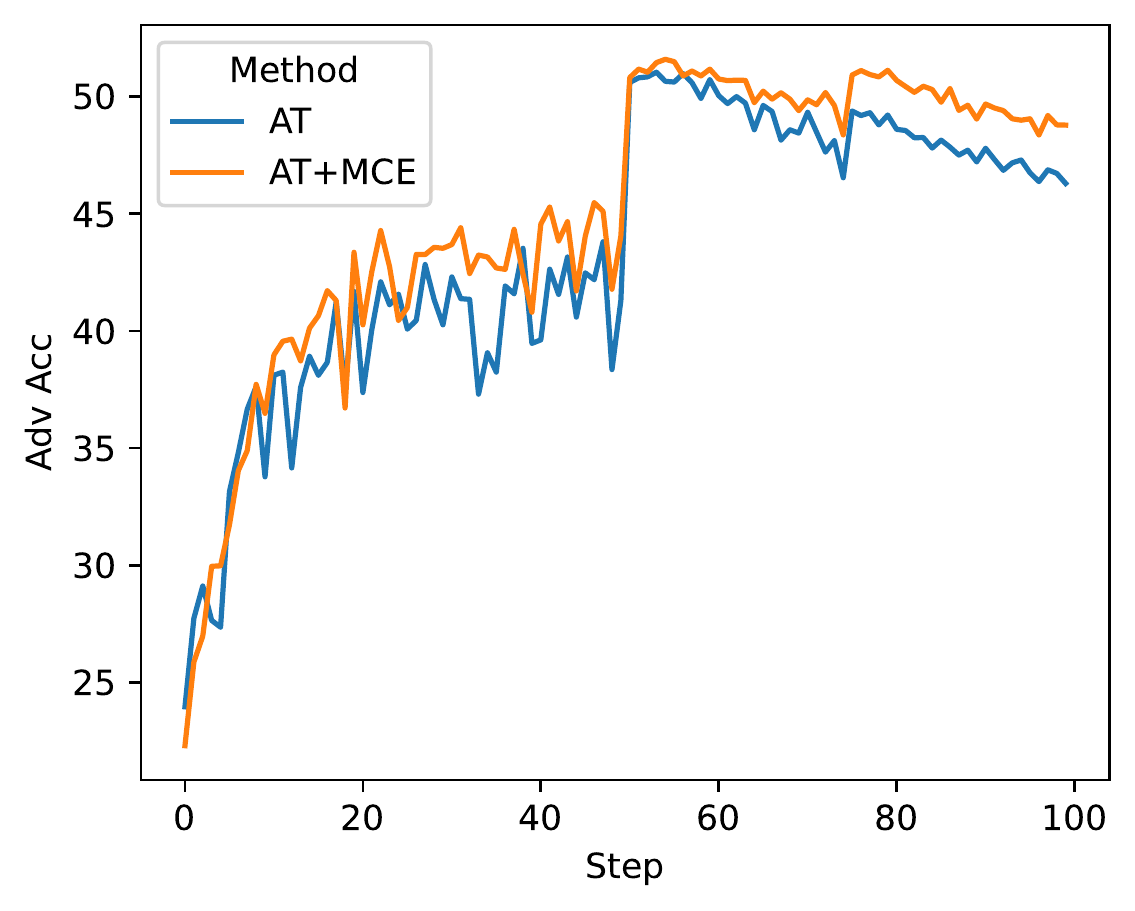}
  \end{minipage}
   \vskip -0.4cm
  \label{fig:at_atmce}
  \caption{ResNet-18 robustness of AT and AT+MCE.}
  \end{minipage}
  \hspace{-0.1cm}
  \begin{minipage}[b]{0.38\textwidth} 
    \centering
    \begin{sc}
    \begin{tabular}{c|cc}
    \toprule
    Method & Clean & Adv \\
    \midrule
    \midrule
    \samplerall & $82.06$ & $50.95$  \\
    \samplerrnd & $82.24$ & $50.77$  \\
    \samplermax & $68.86$ & $44.54$ \\
        AT($2$) & $81.62$ & $50.87$ \\
    \bottomrule
    \end{tabular}
    \end{sc} 
    \vspace{0.6cm}
    \tabcaption{Ablation of the proposed methods. 
    } 
    \label{tab:loss_ablation} 
  \end{minipage} 
\end{figure}

\subsection{More Results}
\label{sec:more_results}

In Table~\ref{tab:loss_ablation}, we perform an ablation study between different samplers. For this experiment, we use CIFAR-10 dataset.
We report the results of running  Algorithm~\ref{alg:mrrob_nn} with different samplers on ResNet-18 for 1 boosting iteration (where the base classifier is trained for 100 epochs).
From the table we could see that \samplerrnd\ is close to \samplerall\ but with slightly lower robust accuracy and higher clean accuracy.
We also compare with ``AT(2)'', which means we double the training loss of AT, to indicate that the improvement of \samplerall\ is not a result of a different loss scale.
Given these results, we use $\samplerall$ by default in our experiments presented in the main paper.

To show the difference between AT and AT+MCE, we also plot the performance curve of ResNet-18 for 100 epochs training in Figure~\ref{fig:at_atmce}.
As an evidence that our method is not fake defense, we try to attack the ResNet-18 model from AT+MCE  with an adaptively designed attack \citep{tramer2020adaptive}.
To be concrete, we replace the cross entropy loss computation in PGD-20 attack with MCE loss. 
The adversarial accuracy from this modified attack almost remains the same ($50.95\to 51.19$).
Considering our method has better adversarial accuracy even under AutoAttack, we believe our defense is not a result of gradient masking / obfuscated gradient.

We also present a more detailed version of boosting results in Table \ref{tab:boosting_ablation}.
Because the $\abernehty$ algorithm of \citet{abernethy2021multiclass} uses the whole ensemble $g_{1:t} = \sum_{j=1}^t g_j / t\ $ rather than only the new model $g_t$ to calculate the loss for new model optimization (see Appendix~\ref{sec:abernethy}),
we also try this version of $\robboostalgnn$ and name it $\robboostalgnn+\whole$ in this section.
We then use $\abernehty+\textsc{Ind}$ to denote the version of $\abernehty$ which only relies on $g_t$ to calculate the loss of the new model optimization, where ``ind'' stands for individual update.
To summarize, the improvement of our methodology is consistent for different settings.

\begin{table*}[t]
\caption{Boosting experiments with ResNet-18 being the base classifier.
}
\label{tab:boosting_ablation}
\centering
\scriptsize
\begin{sc}
\begin{tabular}{l|cccccccccc}
\toprule
\multirow{2}{*}{ Method } & \multicolumn{2}{c}{ Iteration 1 } & \multicolumn{2}{c}{ Iteration 2 }  & \multicolumn{2}{c}{ Iteration 3 }  & \multicolumn{2}{c}{ Iteration 4 }  & \multicolumn{2}{c}{ Iteration 5 } \\
&  Clean & Adv&  Clean & Adv&  Clean & Adv&  Clean & Adv&  Clean & Adv \\
\midrule
\midrule
Wider model & $82.61$ & $ 51.73$ & --- & --- & --- & --- & --- & --- & --- & ---  \\
Deeper model &  $82.67$ & $52.32$  & --- & --- & --- & --- & --- & --- & --- & ---  \\
\midrule
$\abernehty + \whole+\randinit$ & $82.00$ & $51.05$ & $84.58$ & $49.95$ & $83.87$ & $51.66$ & $82.56$ & $52.72$ & $81.44$ & $52.92$ \\
$\robboostalgnn + \whole + \randinit$ & $81.25$ & $51.80$ & $85.02$ & $52.29$ & $84.50$ & $53.11$ & $84.31$ & $53.58$ & $83.61$ & $54.01$ \\
\midrule
$\abernehty + \whole+\perinit$ & $82.18$ & $50.97$ & $85.60$ & $50.13$ & $84.59$ & $51.77$ & $84.21$ & $52.79$ & $82.78$ & $53.28$ \\
$\robboostalgnn + \whole + \perinit$ & $81.15$ & $51.68$ & $85.73$ & $52.59$ & $85.49$ & $53.12$ &$ 85.10$ & $53.72$ &$ 84.62$ & $54.00$ \\
\midrule
$\abernehty + \textsc{Ind} + \randinit$ & $82.16$ & $50.95$ & $85.13$ & $50.57$ & $85.55$ & $51.58$ & $85.75$ & $51.88$ & $85.92$ & $51.99$ \\
$\robboostalgnn + \textsc{Ind} + \randinit$ & $81.04$ & $51.83$ & $84.61$ & $52.68$ & $84.93$ & $53.51$ & $85.01$ & $53.95$ & $85.35$ & $54.13$ \\
\midrule
$\abernehty + \textsc{Ind} + \perinit$ & $82.12$ & $51.04$ & $85.73$ & $50.91$ & $85.98$ & $51.89$ & $85.97$ & $52.31$ & $85.81$ & $52.52$\\
$\robboostalgnn  + \textsc{Ind} + \perinit$ & $81.34$ & $51.92$ & $84.97$ & $52.97$ & $85.28$ & $53.62$ & $85.99$ & $54.26$ & $86.16$ & $54.42$ \\
\bottomrule
\end{tabular}
\end{sc}
\end{table*}

\subsection{Boosting algorithm of \citet{abernethy2021multiclass}}
\label{sec:abernethy}

\citet{abernethy2021multiclass} proposed a greedy stagewise boosting algorithm for building robust ensembles.  
Consider the following set of score-based base classifiers: $\G = \{g_{\theta}:\theta \in \R^D\},$ where $g_{\theta}:\cX \to \R^K$ is a neural network parameterized by $\theta$.  \citet{abernethy2021multiclass} construct ensembles of the form 
$g_{1:T}(\x) = \sum_{t=1}^T g_{\theta_t}(\x) / T$.\footnote{\citet{abernethy2021multiclass} also learn the weights of each component in the ensemble. In our experiments, we give equal weights to all the base classifiers in the ensemble.}
The authors rely on a greedy stagewise algorithm to build this ensemble. 
In the $t$-th stage of this algorithm, the method picks a base classifier $g_{\theta_t}$
via the following greedy procedure
\begin{align*}
    \theta_t = \argmin_{\theta\in \R^D} \frac{1}{n}\sum_{i=1}^n \max_{\delta \in \gB(\epsilon)} \ce\left(g_{1:t-1}(\x_i+\delta) + \frac{1}{t}g_{\theta}(\x_i+\delta), y_i\right).
\end{align*}
The authors solve this objective using AT~\citep{madry2017towards}. 
That is, for each step of gradient descent on $w, \theta$, we solve the inner optimization problem via gradient ascent on the $\delta$'s, with projections on to $\gB(\epsilon)$ at each step. 
Note that, at the beginning of $t$-th stage of the algorithm, $\theta_t$ is initialized randomly by \citet{abernethy2021multiclass}. 
In our experiments, we noticed that using persistent initialization (\emph{i.e.,} initializing $\theta_t$ to $\theta_{t-1}$) leads to more robust ensembles.

\subsection{Circumventing the Defense of \citet{Pinot2020RandomizationMH}}
\label{sec:pinot_defense_break}

\begin{table}[h]
\setlength{\tabcolsep}{4mm}
\caption{Demonstration of the failure of \citet{Pinot2020RandomizationMH} on CIFAR-10}
\label{tab:fake_defense}
\begin{center}
\begin{small}
\begin{tabular}{l  c}
\toprule
Testing scenario & Accuracy (\%) \\
\midrule
Clean data & $79.08$\\
Adversarial examples of $h_1$  & $54.60$ \\
\citet{Pinot2020RandomizationMH}'s ensemble attack  & $60.72$ \\
Our adaptive ensemble attack & $37.37$ \\
\bottomrule
\end{tabular}
\end{small}
\end{center}
\end{table}

\citet{Pinot2020RandomizationMH} is one of the first works to develop boosting inspired algorithms for building robust ensembles.
In this work, the authors build an ensemble by combining two base classifiers. The first one of these base classifiers is trained using PGD adversarial training (\emph{i.e.,} AT~\citep{madry2017towards}). 
The second base classifier is trained using standard empirical risk minimization on the adversarial dataset of the first base classifier. 
That is, the authors  train the second base classifier to be robust \emph{only} to the adversarial perturbations of the first base classifier. 
To be precise, after adversarially training the first neural network $g_1$, \citet{Pinot2020RandomizationMH} propose to additionally train a second model $g_2$ using standard training with the adversarial dataset of the first model $g_1$.
These two models are then {randomly} combined during evaluation, \ie, for each test input, the algorithm selects one classifier at random and then outputs the corresponding predicted class.
The authors claim that the proposed algorithm significantly boosts the performance of a single adversarially trained classifier.

In this work, we disprove this claim and show that the proposed defense can be circumvented using better adversarial attacks.
Our key insight here is that there is a mismatch between the attack techniques used during training and inference phases of \citet{Pinot2020RandomizationMH} (such phenomenon is referred to as ``Obfuscated Gradient'' phenomenon in \citet{Athalye2018ObfuscatedGG}\footnote{We refer the reader to Section 5.3 (``Stochastic Gradients'') of \citet{Athalye2018ObfuscatedGG} for more discussion on the failure of defenses that rely on randomization.}).
This mismatch often leads to a false sense of robustness.
Therefore, to properly evaluate the robustness of any defense technique, one should try to design adaptive attacks \citep{tramer2020adaptive} which take the algorithmic details of the defense into consideration.

Let $g_1, g_2$ be the base neural networks learned by the technique of \citet{Pinot2020RandomizationMH}. Let $w,(1-w)$ be the weights of these classifiers, where $w\in [0,1].$  Given a test input, the authors first sample a base classifier according to these weights, and output the predicted class of the chosen model. So the loss of this ensemble at a point $(\x, y)$ is given by
\begin{align}
\label{eqn:pinot_training}
w\ell_{0-1}(g_1(\x), y) + (1-w) \ell_{0-1}(g_2(\x), y).
\end{align}
To generate adversarial perturbation at a point $(\x,y)$ for this ensemble, the authors solve the following optimization problem using PGD
\[
\max_{\delta \in \gB(\epsilon)} \ce(wg_1(\x+\delta) + (1-w)g_2(\x+\delta), y).
\]
Note that during the attack, the aggregation of the base classifiers is performed at the logit level. Whereas, during training the aggregation is performed at the level of predictions (Equation~\eqref{eqn:pinot_training}). So, we design a slightly different attack which resolves this mismatch. In this attack the aggregation is performed at the probability level. This involves solving the following problem
\[
\max_{\delta \in \gB(\epsilon)} -\log\left(\frac{[w\rvp_1(\x+\delta)+(1-w)\rvp_2(\x+\delta)]_y}{\sum_{y'} [w\rvp_1(\x+\delta)+(1-w)\rvp_2(\x+\delta)]_{y'}}\right),
\]
where $\rvp_1(\x+\delta) = \sm(g_1(\x+\delta)), \rvp_2(\x+\delta) = \sm(g_2(\x+\delta)).$

Table \ref{tab:fake_defense} presents the performance of the ensembling technique of \citet{Pinot2020RandomizationMH} on various attacks. For this experiment, we use CIFAR-10 dataset and use a nine-layer convolutional neural network (as in \citet{Wang2019OnTC}) as the base classifier. It can be seen that the performance of the ensemble takes a hit when we use our adaptive attack to generate perturbations. In particular, there is a $23\%$ drop in robust accuracy when we switch from the attack of \citet{Pinot2020RandomizationMH} to our attack.  For comparison, performing standard AT on this model architecture results in a model with $80.29\%$ clean accuracy and $45.08\%$ PGD20 accuracy.

\end{document}